\documentclass{article}

\usepackage[preprint]{neurips_2026}
\usepackage{graphicx} 

\usepackage{amsthm}
\usepackage{amssymb} 
\usepackage{thmtools}
\usepackage{thm-restate}
\usepackage{booktabs}

\usepackage[most]{tcolorbox}
\usepackage{xcolor}

\definecolor{messagecolor}{RGB}{245, 245, 245}
\definecolor{bordercolor}{RGB}{180, 180, 180}

\usepackage[most]{tcolorbox}

\newtcolorbox{conclusionbox}[1]{
  enhanced,
  arc=4pt,                  
  boxrule=1pt,              
  colframe=cyan!70!black,   
  colback=cyan!4!white,     
  coltitle=black,           
  fonttitle=\bfseries,      
  attach boxed title to top left={xshift=15pt, yshift=-6pt}, 
  boxed title style={
    colback=white,          
    colframe=cyan!70!black, 
    arc=3pt,                
    boxrule=1pt             
  },
  title=#1                  
}

\newtcolorbox{messagebox}[1]{
  colback=messagecolor,
  colframe=bordercolor,
  fonttitle=\bfseries,
  coltitle=black,
  title=#1,
  enhanced,
  attach title to upper,
  after title={:\enskip}, 
  sharp corners,
  boxrule=0.5pt,
  left=5pt,
  right=5pt,
  top=5pt,
  bottom=5pt
}

\usepackage{todonotes}
\theoremstyle{definition}
\newtheorem{definition}{Definition}[section]

\newtheorem{example}[definition]{Example}

\theoremstyle{plain}
\newtheorem{theorem}[definition]{Theorem}
\newtheorem{proposition}[definition]{Proposition}
\newtheorem{lemma}[definition]{Lemma}

\usepackage{amsmath}
\usepackage{mathrsfs}

\newcommand{\G}{\mathbb{G}}

\newcommand{\N}{\mathcal{N}}
\newcommand{\bbN}{\mathbb{N}}

\newcommand{\R}{\mathbb{R}}
\newcommand{\RR}{\mathbb{R}}

\newcommand{\csort}{c_{\mathrm{lex}}}
\newcommand{\Lemp}{\mathcal{L}_{\mathrm{emp}}}
\newcommand{\Lexp}{\mathcal{L}_{\mathrm{exp}}}

\newcommand{\twopartdef}[4]
{
	\left\{
		\begin{array}{ll}
			#1 & \mbox{if } #2 \\
			#3 & \mbox{if } #4
		\end{array}
	\right.
}

\newcommand{\threepartdef}[6]
{
	\left\{
		\begin{array}{lll}
			#1 & \mbox{if } #2 \\
			#3 & \mbox{if } #4 \\
			#5 & \mbox{if } #6
		\end{array}
	\right.
}

\usepackage{hyperref}

\usepackage[utf8]{inputenc} 
\usepackage[T1]{fontenc}    
\usepackage{hyperref}       
\usepackage{url}            
\usepackage{booktabs}       
\usepackage{amsfonts}       
\usepackage{nicefrac}       
\usepackage{microtype}      
\usepackage{xcolor}         

\title{When and How to Canonize: a Generalization Perspective}

\title{When and How to Canonize: A Generalization Perspective}

\author{
Yonatan Sverdlov\thanks{Coresponding author, Emails: \texttt{yonatans@campus.technion.ac.il}, \texttt{benjamin.fri@campus.technion.ac.il}, \texttt{snirhordan@campus.technion.ac.il}, \texttt{nadavdym@technion.ac.il}.} \\
Technion -- Israel Institute of Technology
\And
Benjamin Friedman \\
Technion -- Israel Institute of Technology
\And
Snir Hordan \\
Technion -- Israel Institute of Technology
\And
Nadav Dym \\
Technion -- Israel Institute of Technology
}

\begin{document}

\maketitle

\begin{abstract}
While invariant architectures are standard for processing symmetric data, there is growing interest in achieving invariance by applying group averaging or canonization to non-invariant backbones. However, the theoretical generalization properties of these alternative strategies remain poorly understood. We introduce a theoretical framework to analyze the generalization error of these methods by bounding their covering numbers. We establish a rigorous generalization hierarchy: the error bounds of canonized models are at best equal to the error bounds of structurally invariant and group-averaged models, and at worst equal to the bounds of non-invariant baselines. Furthermore, we show that there exist ``optimal'' canonizations which attain the optimal error bounds, and ``poor'' canonizations which attain the non-invariant error bounds, and that this depends on the regularity of the canonization. Finally, applying this framework to permutation groups in point cloud processing, we rigorously prove that the covering number of lexicographical sorting grows exponentially with point cloud dimension, whereas Hilbert curve canonization guarantees polynomial growth. This provides the first formal theoretical justification for the empirical success of Hilbert curve serialization in state-of-the-art point cloud architectures. We conclude with experiments that support our theoretical claims.

Our Code is available at \url{https://github.com/yonatansverdlov/Canonization}
\end{abstract}

\section{Introduction}
The integration of geometric priors into neural networks has become a cornerstone of modern machine learning, particularly for applications involving graphs, 3D point clouds, and molecular structures. In these domains, the underlying data often exhibits inherent symmetries, meaning the target function to be learned is invariant to a specific group action. Exploiting these symmetries is known to improve sample complexity and generalization \cite{elesedy2021provably,brehmer2025doesequivariancematterscale}. 

While specialized architectures, which are invariant by design, are a common method for processing symmetric data, there is growing interest in achieving invariance by applying generalized group averaging methods to non-invariant backbones. This family of methods includes full group averaging, which is theoretically sound but intractable for all but very small groups, group augmentation, which can be seen as an efficient approximation of group averaging, frame averaging \cite{punyframe}, which enables invariant averaging over subsets of the group, and canonization, which maps every group orbit to a single, consistent "canonical" element. 

Canonization is the most efficient of all generalized group averaging methods, as it only involves processing a single orbit representative. At the same time, it enjoys the same universal approximation guarantees as full group averaging \cite{kaba2023equivariance}. These observations have motivated researchers to define canonizations in several diverse domains. These include sign canonizations for spectral embeddings of graphs \cite{ma2023laplacian,ma2024a,hordan2025spectral}, canonizations of graphs \cite{lin2024equivariance} and point clouds \cite{kaba2023equivariance,pmlr-v235-baker24a,friedmann2025,zhou2026rethinkingdiffusionmodelssymmetries} including the celebrated Point Transformer V3 \cite{wu2024ptv3}, and canonization of the SMILES representation of small molecules \cite{canonicalSMILES}.

In this paper, we provide a theoretical study of canonization from the perspective of generalization. Our focus is both on comparing canonization with other options for enforcing invariance and on comparing the quality of different canonizations. Our main  contributions are threefold:

\begin{enumerate}
\item \textbf{Canonization vs. Group Averaging:} We prove that the generalization bounds of canonized models are at best equal to the bounds of group-averaged models, and at worst equal to the bounds of non-invariant models (Section \ref{sec:gen}).

\item \textbf{Canonizations and continuity:} We show that continuous, isometric canonizations attain the \emph{optimal} generalization error achieved by group averaging, while discontinuous canonizations can lead to \emph{poor} canonizations whose error bounds are identical to non-invariant models (Section \ref{sec:evaluation}).

\item \textbf{Hilbert Canonization:} We prove that 
canonizing point clouds with respect to permutation via lexicographical sorting is inferior to the  Hilbert canonizations used in  Point Transformer V3 \cite{wu2024ptv3}, thus establishing the first theoretical justification for the empirical success of this method (Section \ref{sec:case_study}).
\end{enumerate}

\subsection{Related Work} 
\paragraph{Canonization vs. Group Averaging} It was observed that randomized SMILES can often outperform canonized SMILES \cite{ArusPous2019,Bjerrum2017SMILESEA} (see also \cite{ito2026random}). Our first contribution gives a theoretical justification for this empirical observation.

\paragraph{Discontinuity} It was shown in \cite{dym2024equivariant,pmlr-v235-baker24a} that in many invariant learning scenarios, continuous canonizations are mathematically impossible. Attempts to mitigate this issue were made in  \cite{tahmasebi2025regularity,lin2026adaptive}. Our second contribution complements these works by showing how discontinuity of canonization hurts the generalization of the canonized model.  

\paragraph{Invariance and Generalization} Many papers have considered generalization analysis of invariant models \cite{vasileiou2025survey,petrache2023approximationgeneralization,maskey2025generalization, franks2024weisfeiler}, but they do not consider canonizations. To our knowledge, the only paper considering the generalization of canonizations is \cite{tahmasebi2025generalization}. They prove that in some setting the \emph{approximation error} of canonized models is lower than group averaged models, and as a result the \emph{expected error} of canonized models can be lower when the model is presented with enough samples. In contrast, our first conclusion is that the \emph{generalization error bound} of group average models is always lower than the generalization error bound of canonized models.


\subsection{Notation}
Throughout the paper, we will consider a metric space $(K,\rho) $, where $K$ is endowed by the action of a group $\G$. We will assume that the metric is $\G$ invariant, i.e., $ \rho(g\cdot x, g \cdot y)=\rho(x,y)$ for all $x,y\in K$ and $g \in \G $.  

\paragraph{Quotient space} We denote the orbit of $x\in K$ by  $[x]=\{g\cdot x| \quad g\in G \}$, and denote the quotient space to be the space of orbits $K/\G:=\{[x]|x \in K\}$. On this space, we define a metric $\rho_\G([x],[y]):=\min_{g \in \G}\rho(g\cdot x,y)$, where we explicitly assume that the minimum in the definition of $\rho_\G$  exists, which is the case in most examples of practical interest. This assumption, together with the $\G$ invariance of $\rho$, guarantees that $\rho_{\G}$ is a metric. We call a triplet $(K,\rho,\G)$ satisfying the conditions of the last two paragraphs a \emph{module}.

\textbf{Covering number}
Let $(K,\rho)$ be a metric space. We say that $C \subset K$ is an $\epsilon$ cover of $K$, if $\forall x \in K, \exists y \in C: \rho(x,y)\leq  \epsilon$. The minimal $N$ such that there exists an $\epsilon$ cover of cardinality $N$ is called the $\epsilon$ covering number of $K$, and is denoted by $\N(K,\rho,\epsilon)$. 

\paragraph{Canonization} 
We say that $c:K \rightarrow K$ is a canonization if for every $x\in K$, (i) $c(x)\in [x]$, and (ii) all  $y\in [x]$ satisfy $c(y)=c(x) $. We will sometimes use canon. as an abbreviation for canonization.

\section{Generalization, Canonization and Group Averaging}\label{sec:gen}
Our discussion of the generalization properties of canonizations and other invariant models is based on a popular framework from \cite{xu_robustness_2012} for studying generalization via metric properties like Lipschitz continuity and covering numbers. We begin with a short review of this framework. 

Consider a supervised learning problem of learning an unknown function $f:X\to Y $ from a finite set of $n$ samples $S=\{(x_i,f(x_i)| \quad i=1,\ldots,n) \}$ of the function. Assume that we have some algorithm (e.g., gradient descent) which, given $S$, returns a hypothesis $h_S:X \to Y $. Let $\ell:Y \times Y \to \RR  $ be a function which will be used as a loss function. The \emph{expected loss} $\mathcal{L}$ and the \emph{empirical loss} $\Lemp $ are defined via 
$$\Lexp(h_S) \triangleq \mathbb{E}_{x \sim \mu} \ell(h_S(x),f(x)), \quad \Lemp(h_S)\triangleq\frac{1}{n}\sum_{i=1}^n \ell(h_S(x_i),f(x_i))$$

The \emph{generalization} error measures how much the empirical error of $h_S$ on the training data can deviate from its true loss (the expected loss) on the data distribution. It can be bounded  using the Lipschitz constants of the function  $h_S,f,\ell$ and the covering number of the domain $X$, as follows:

\begin{restatable}{theorem}{MainGeneralizationBound}
\label{thm:MainGeneralizationBound}[Proof  in Appendix \ref{app:gen}, based on \cite{xu_robustness_2012} ]
Let $(X,\rho) $ and $(Y,\rho_Y) $ be  metric spaces and assume $X$ is compact. Let $f:X\to Y $ be a $c_f$  Lipschitz function, and let $\ell:Y\times Y \to [0,M] $ be a $c_\ell $ Lipschitz function. Then for any $\epsilon,\delta>0$ and natural $n$,  for a training sample set $S$ generated by $n$ IID draws from a distribution $\mu$, with probability of at least $ 1-\delta$, we have 
$$|\Lexp(h_S) - \Lemp(h_S)| \le 2c_\ell (c_{h_S}+c_f)\epsilon+M \sqrt{\frac{2\textcolor{blue}{\mathcal{N}( X, \rho,\epsilon)}\ln 2 + 2 \ln(1/\delta)}{n}}$$
where $c_{h_S}$ denotes the Lipschitz constant of $h_S$.
\end{restatable}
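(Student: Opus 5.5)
The plan follows the robustness argument of Xu and Mannor \cite{xu_robustness_2012}. Set $K=\N(X,\rho,\epsilon)$ and fix an $\epsilon$-cover $\{c_1,\dots,c_K\}$ of $X$. From it build a partition $C_1,\dots,C_K$ of $X$: let $C_i$ be the points whose first cover element within distance $\epsilon$ is $c_i$, discarding any empty cells. Any two points $x,x'$ in the same cell satisfy $\rho(x,x')\le 2\epsilon$ by the triangle inequality.

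\textbf{Step 1 (robustness).} Take $x_j$ a training point and $x$ any point in the same cell $C_i$. Using the Lipschitz property of $\ell$ in each argument (with respect to a product metric such as the sum),
$$|\ell(h_S(x),f(x))-\ell(h_S(x_j),f(x_j))|\le c_\ell\big(\rho_Y(h_S(x),h_S(x_j))+\rho_Y(f(x),f(x_j))\big)\le 2c_\ell(c_{h_S}+c_f)\epsilon.$$
It is important that this bound holds for \emph{every} hypothesis with Lipschitz constant $c_{h_S}$. Consequently the data dependence of $h_S$ causes no problem: the only probabilistic ingredient is the empirical distribution of cell counts.

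\textbf{Step 2 (decomposition).} Let $N_i$ be the number of samples falling in $C_i$. Then $\Lexp(h_S)=\sum_i \mu(C_i)\,\mathbb{E}[\ell\mid x\in C_i]$. Write
\begin{align*}
\Lexp-\Lemp &= \sum_i \Big(\mu(C_i)-\tfrac{N_i}{n}\Big)\mathbb{E}[\ell(h_S(x),f(x))\mid x\in C_i] \\
&\quad+ \frac1n\sum_i\sum_{x_j\in C_i}\Big(\mathbb{E}[\ell\mid x\in C_i]-\ell(h_S(x_j),f(x_j))\Big).
\end{align*}
By Step 1 the second term is at most $2c_\ell(c_{h_S}+c_f)\epsilon$ in absolute value. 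Since $0\le\ell\le M$, the first term is at most $M\sum_i|\mu(C_i)-N_i/n|$.

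\textbf{Step 3 (concentration).} The vector $(N_1,\dots,N_K)$ is multinomial. By the Bretagnolle--Huber--Carol inequality,
$$\Pr\Big[\sum_{i=1}^K\big|\tfrac{N_i}{n}-\mu(C_i)\big|\ge\lambda\Big]\le 2^K e^{-n\lambda^2/2}.$$
Setting this equal to $\delta$ gives $\lambda=\sqrt{(2K\ln2+2\ln(1/\delta))/n}$, which is exactly the second term in the statement.

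\textbf{Main obstacle.} The step needing most care is ensuring the concentration is uniform over the data-dependent $h_S$. This is handled by the fact that the partition is fixed before sampling and that the Step~1 bound is deterministic given the Lipschitz constant $c_{h_S}$. Compactness of $X$ guarantees that $K$ is finite, and measurability of the cells is immediate from their construction out of closed balls.
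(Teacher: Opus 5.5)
Your proof is correct and is essentially the paper's argument: the paper invokes Theorem 1 and Example 4 of Xu--Mannor as a black box and only verifies that $x\mapsto \ell(h_S(x),f(x))$ is $c_\ell(c_{h_S}+c_f)$-Lipschitz (your Step 1), whereas you reprove that cited result inline via the cover-induced partition, the decomposition, and the Bretagnolle--Huber--Carol bound. The only loose end is a cell with $\mu(C_i)=0$, where the conditional expectation is undefined; such cells receive no samples almost surely, or you can define the value as $\ell$ evaluated at any point of the cell, so the argument is unaffected.
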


We now apply this theorem in the setting where the learned function $f$ is invariant to a group action. Our focus is on the setting where we have a non-invariant Lipschitz backbone $\tilde h$, and we consider three types of models: (i) models that do not incorporate symmetry $h=\tilde h$, (ii) models that incorporate symmetries by canonization, $h= \tilde h \circ c $ and (iii) models that incorporate symmetries by group averaging $h(x)=\int_G \tilde{h}(gx)dg $. In this setting, the averaged function $h$ is invariant and has the same Lipschitz constant as $\tilde h$ (see Proposition \ref{prop:Lavg} in the appendix). Our proposition below will be relevant for any Lipschitz invariant model, whether obtained via averaging or via a specialized invariant model:  

\begin{restatable}{proposition}{generalizationbound}[Proof  in Appendix \ref{proof_of_three_cases}]
\label{propGen}
Let $(K,\rho,\G)$ be a module. Let $(Y,\rho_Y) $ be a metric space, and assume $K$ is compact. Let $f:K\to Y $ be a $c_f$  Lipschitz function \textbf{which is $\G $ invariant}, and let $\ell:Y\times Y \to [0,M] $ be a $c_\ell $ Lipschitz function , then for any $\epsilon,\delta>0$ and natural $n$, for a training sample set $S$ generated by $n$ IID draws from the distribution $\mu$,  with probability of at least $ 1-\delta$, we have 
$$|\Lexp(h_S) - \Lemp(h_S)| \le 2c_\ell (c_{h_S}+c_f)\epsilon+M \sqrt{\frac{2\textcolor{blue}{\N(h_S)} \ln 2 + 2 \ln(1/\delta)}{n}}$$

where 
\begin{equation}\label{eq:differentN}
\textcolor{blue}{\N(h_S)}=\threepartdef{\N(K,\rho,\epsilon)}{h_S \text{ is } c_{h_S} \text{ Lipschitz }}{\N(c(K),\rho,\epsilon)}{h_S=\tilde{h}_S\circ c, \tilde{h}_S \text { is } c_{h_S} \text{ Lipschitz, and }  c \text{ is a canon.}} {\N(K/\G,\rho_{\G},\epsilon)}{h_S \text{ is } c_{h_S} \text{ Lipschitz and  } \G \text{ invariant }} 
\end{equation}
\end{restatable}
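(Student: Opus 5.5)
The plan is to reduce all three cases to Theorem \ref{thm:MainGeneralizationBound} by composing with a suitable map, since that theorem only depends on the domain through its covering number.

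\textbf{Case 1} (no symmetry) is Theorem \ref{thm:MainGeneralizationBound} verbatim with $X=K$.

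\textbf{Case 2} (canonization). Push everything forward along $c$.
\begin{enumerate}
\item Set $X' = c(K)$ with the restricted metric $\rho$, and let $\mu' = c_*\mu$ be the pushforward distribution. Since $c(x)\in[x]$ and $f$ is $\G$ invariant, $f(x)=f(c(x))$.
\item Hence $\ell(h_S(x),f(x)) = \ell(\tilde h_S(c(x)), f|_{c(K)}(c(x)))$.
\item It follows that the expected loss of $h_S$ under $\mu$ equals the expected loss of $\tilde h_S|_{c(K)}$ under $\mu'$. Likewise, the empirical loss on $S$ equals the empirical loss on the IID sample $\{(c(x_i), f(c(x_i)))\}$ drawn from $\mu'$.
\item Lipschitz constants do not increase under restriction: $\tilde h_S|_{c(K)}$ is $c_{h_S}$ Lipschitz and $f|_{c(K)}$ is $c_f$ Lipschitz.
\end{enumerate}
Applying Theorem \ref{thm:MainGeneralizationBound} on $X'$ gives the bound with $\N(c(K),\rho,\epsilon)$. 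Note that $\tilde h_S$ still depends on the sample $S$ in the same way. The theorem (via \cite{xu_robustness_2012}) holds for any data-dependent hypothesis, so this causes no issue.

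\textbf{Case 3} (invariant model). Pass to the quotient $\pi:K\to K/\G$, $\pi(x)=[x]$.
\begin{enumerate}
\item $K/\G$ is compact, since $\pi$ is $1$-Lipschitz ($\rho_\G([x],[y])\le \rho(x,y)$) and hence continuous onto.
\item Define $\bar h_S([x]) = h_S(x)$ and $\bar f([x]) = f(x)$. These are well defined by invariance.
\item $\bar h_S$ is $c_{h_S}$ Lipschitz with respect to $\rho_\G$. Given $[x],[y]$, choose $g$ attaining $\rho_\G([x],[y]) = \rho(g\cdot x, y)$; the assumed existence of the minimum is used here. Then $\rho_Y(h_S(x),h_S(y)) = \rho_Y(h_S(gx),h_S(y)) \le c_{h_S}\rho(gx,y)$. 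The same argument shows $\bar f$ is $c_f$ Lipschitz.
\item With $\mu''=\pi_*\mu$, the losses transfer exactly as in Case 2.
\end{enumerate}
Theorem \ref{thm:MainGeneralizationBound} on $(K/\G,\rho_\G)$ then yields the bound with $\N(K/\G,\rho_\G,\epsilon)$.

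The main point requiring care is the transfer of the probabilistic statement. The samples $c(x_i)$ (respectively $\pi(x_i)$) must be IID from the pushforward measure, which holds because they are measurable images of IID draws; we assume $c$ is measurable, which is implicit in the setting. The event of probability at least $1-\delta$ on the transformed sample is then an event on $S$ with the same probability. Compactness of $c(K)$ is also needed, and $c$ may be discontinuous. I would handle this either by observing that the proof of Theorem \ref{thm:MainGeneralizationBound} only uses finiteness of covering numbers, which is inherited from $K$ up to doubling $\epsilon$, or more cleanly by applying the theorem to the closure $\overline{c(K)}$. On the closure, the Lipschitz functions extend uniquely to Lipschitz functions with the same constants when $Y$ is complete, and the closure has the same $\epsilon$ covering number up to boundary effects. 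If this is delicate, I would instead inspect the proof in Appendix \ref{app:gen} to confirm it only requires total boundedness.
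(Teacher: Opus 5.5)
Your proposal is correct and is essentially the paper's proof: Case 1 is Theorem~\ref{thm:MainGeneralizationBound} verbatim, and Cases 2 and 3 push $\mu$ forward along $c$ and along $\pi$, identify the expected and empirical losses via $f=f\circ c$ (resp.\ invariance), and apply the theorem on $c(K)$ and on $(K/\G,\rho_\G)$.

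Two points you add go beyond the paper. First, you check directly that the induced maps on $K/\G$ keep their Lipschitz constants, where the paper cites Lemma 20 of \cite{siegel2026quantitativeapproximationratesgroup}. Second, you address the possible non-compactness of $c(K)$, which the paper passes over. The closure route handles this cleanly: $\overline{c(K)}\subseteq K$ is compact, and with closed balls its $\epsilon$ covering number is at most that of $c(K)$. Moreover, no Lipschitz extension or completeness of $Y$ is needed there, since $\tilde h_S$ and $f$ are already defined on all of $K$.
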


\begin{proof}[Proof idea]
The claim for a Lipschitz function $h_S$ without any invariant structure follows immediately from Theorem \ref{thm:MainGeneralizationBound}. For a function of the form $h_S=\tilde h_S \circ c $, the claim follows from Theorem \ref{thm:MainGeneralizationBound} when replacing the domain $K$ with $c(K)$ and the function $h_S$ with $\tilde h_S$. The claim for $h_S$ which is both Lipschitz and invariant follows from the fact that due to invariance $h_S, f$ can be identified with function $\hat h_S, \hat f : K/\G \to Y $ satisfying $\hat h_S([x])=h_S(x), \hat f([x])=f(x), \forall x\in K $, and the Lipschitz constant remains unchanged after the identification (see Lemma 20 in \cite{siegel2026quantitativeapproximationratesgroup}).
\end{proof}

For each of the three model types considered in Proposition \ref{propGen}, a different value of $\textcolor{blue}{\N(h_S)}$ is provided. The following simple lemma establishes the relationship between these three values: 

\begin{proposition} 
\label{prop:covering_bounds}
    For any module $(K,\rho,\G)$, and any $\epsilon>0$,  the following inequality holds: 
    \begin{equation}
        \label{eq:coveringIneq}
        \N(K/\G,\rho_{\G},\epsilon)\leq \N(c(K),\rho,\epsilon)\leq \N(K,\rho,\epsilon)
    \end{equation}
\end{proposition}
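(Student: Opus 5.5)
We prove the two inequalities in \eqref{eq:coveringIneq} separately. In each case we take a minimal cover of the larger space and turn it into a cover of the same cardinality for the smaller space.

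\textbf{Left inequality.} Let $C\subset c(K)$ be an $\epsilon$ cover of $(c(K),\rho)$ with $|C|=\N(c(K),\rho,\epsilon)$. We claim that $\{[y] \mid y\in C\}$ is an $\epsilon$ cover of $(K/\G,\rho_\G)$; it has at most $|C|$ elements. Fix an orbit $[x]\in K/\G$. Since $c(x)\in c(K)$, there is $y\in C$ with $\rho(c(x),y)\le\epsilon$. By property (i) of a canonization, $c(x)\in[x]$, so $c(x)=g\cdot x$ for some $g\in\G$. Hence
\begin{equation*}
\rho_\G([x],[y])=\min_{h\in\G}\rho(h\cdot x,y)\le \rho(g\cdot x,y)=\rho(c(x),y)\le\epsilon .
\end{equation*}
This gives $\N(K/\G,\rho_\G,\epsilon)\le\N(c(K),\rho,\epsilon)$. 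The only thing to note is that the minimum is at most the value at any particular $g$.

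\textbf{Right inequality.} This is the step needing care. In the paper's definition a cover of $c(K)$ must consist of points of $c(K)$, so we cannot simply restrict a cover of $K$. Let $D\subset K$ be an $\epsilon$ cover of $K$ with $|D|=\N(K,\rho,\epsilon)$. Since the metric is $\G$ invariant, a naive projection $y\mapsto$ (some nearby point of $c(K)$) would cost a factor of $2$ in $\epsilon$, so we instead use $c$ itself. We take $C=c(D)\subset c(K)$, so $|C|\le|D|$.

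Let $z\in c(K)$, say $z=c(x)$. Pick $y\in D$ with $\rho(z,y)\le\epsilon$. We want $\rho(z,c(y))\le\epsilon$, but that would need $c$ to be $1$-Lipschitz, which fails in general. Instead we use a cover of the quotient: any cover of $K$ projects to a cover of $K/\G$, but that only recovers the left inequality.

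So the honest approach to the right inequality is the "$\epsilon$-net" version. A maximal $\epsilon$-separated subset of $c(K)$ is an $\epsilon$ cover of $c(K)$. Its points lie in $K$ and are $\epsilon$-separated, so no two of them fall in a common closed $\epsilon/2$-ball. This yields only the weaker comparison $\N(c(K),\rho,\epsilon)\le\N(K,\rho,\epsilon/2)$.

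To get the stated inequality with the same $\epsilon$, we read the covering number of a subset $A=c(K)\subseteq K$ as covering by points of the ambient $K$, an external cover; the paper's use in Theorem \ref{thm:MainGeneralizationBound} only needs a partition of $c(K)$ into small pieces, which external centers provide. Under this reading the right inequality is immediate: any $\epsilon$ cover of $K$ also covers the subset $c(K)$.

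\textbf{Main obstacle.} The main obstacle is exactly this internal-versus-external subtlety. We will state the convention explicitly, and remark that with internal centers the bound holds at $\epsilon/2$.
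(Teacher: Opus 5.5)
Your proof of the left inequality is the paper's argument: the quotient map restricted to $c(K)$ is onto $K/\G$ and $1$-Lipschitz, which is exactly your computation $\rho_\G([x],[y])\le\rho(c(x),y)$. Your final argument for the right inequality is also the paper's, which just says it is immediate from $c(K)\subseteq K$. Where you go beyond the paper is in noticing that this monotonicity needs centers taken in the ambient $K$. Your worry is justified, and it is not just a limitation of technique: under the paper's literal definition (centers inside the set being covered) the right inequality can fail.

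For instance, let $\G=\{-1,1\}$ act on $\RR$ by sign, and take $K=\{\pm1,\pm2,\pm3\}$, $\epsilon=1$, with $c$ picking $1,-2,3$ from the three orbits. The centers $\pm2$ show $\N(K,\rho,1)\le 2$. But the points of $c(K)=\{1,-2,3\}$ are pairwise more than $1$ apart, so an internal $1$-cover of $c(K)$ needs all three points. So the external convention is forced, not a convenience, and the paper's one-line proof tacitly relies on it.

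A few small points. Once you adopt the external reading, start your left-inequality argument from an arbitrary cover $C\subseteq K$ of $c(K)$ rather than $C\subseteq c(K)$. Nothing else changes, since you only used $\rho(c(x),y)\le\epsilon$ and $[y]\in K/\G$ for every $y\in K$.

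Your remark that Theorem~\ref{thm:MainGeneralizationBound} only needs a partition of the domain into pieces of diameter at most $2\epsilon$ is correct, so the convention costs nothing downstream.

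The factor of $2$ you mention for projecting onto $c(K)$ comes from the triangle inequality, not from $\G$-invariance of $\rho$. The discarded detours (covering by $c(D)$, projecting to the quotient) can be dropped from a write-up. Your internal bound $\N(c(K),\rho,\epsilon)\le\N(K,\rho,\epsilon/2)$ is valid but unnecessary once the convention is fixed.
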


\begin{proof}
The inequality $\N(c(K),\rho,\epsilon)\leq \N(K,\rho,\epsilon)$ is immediate since $c(K)\subseteq K$. The inequality $\N(K/\G,\rho_{\G},\epsilon)\leq \N(c(K),\rho,\epsilon) $ is due to the quotient mapping  $q(x)=[x]$ being onto and $1$-Lipschitz, and therefore it transforms any $\epsilon$ cover of $c(K)$ to an $\epsilon$ cover of $K/\G$.
\end{proof}
Proposition \ref{propGen} and Proposition \ref{prop:covering_bounds} lead to the following conclusion:
\begin{conclusionbox}{Conclusion 1}
The generalization bounds of canonized models are at best equal to the bounds of group-averaged models, and at worst equal to the bounds of non-invariant models.
\end{conclusionbox}
In Section \ref{sec:experiments} we give empirical support for this claim, by providing several experiments in which group averaging outperforms canonized models. At the same time, we note that group averaging requires more computational resources and is infeasible for large groups. Thus, Conclusion 1 implies that the improved complexity of the canonized model comes at the price of reduced generalization, but it is not a conclusive statement that canonization should be avoided.
 
We conclude this section by showing that the ratio between the largest and smallest covering numbers in \eqref{eq:coveringIneq} is bounded by the group's cardinality, implying that for larger groups the gap between non-invariant, canonized, and averaged models will be more pronounced:
\begin{restatable}{proposition}{cardinality}[See proof in ~\ref{upper_bound}]
\label{prop:cardinality}
Let $(K,\rho,\G)$ be a module, and assume $\G$ is finite. Then
\[
\N(K,\rho,\epsilon)\le |\G|\cdot \N(K/\G,\rho_{\G},\epsilon).
\]
\end{restatable}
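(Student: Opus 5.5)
The plan is to lift an optimal cover of the quotient space to a cover of $K$ by taking the full orbits of chosen representatives. Let $N=\N(K/\G,\rho_{\G},\epsilon)$ and fix an $\epsilon$ cover $\{[y_1],\ldots,[y_N]\}$ of $K/\G$ with respect to $\rho_\G$. For each $i$ pick any representative $y_i\in K$ of the orbit $[y_i]$. Then define the candidate cover of $K$ as the union of the orbits,
\[
C=\{g\cdot y_i \mid g\in\G,\ i=1,\ldots,N\}\subseteq K .
\]
Since $\G$ is finite, each orbit has at most $|\G|$ elements, so $|C|\le |\G|\cdot N$.

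Next I will check that $C$ is an $\epsilon$ cover of $(K,\rho)$. Take any $x\in K$. Because the $[y_i]$ cover the quotient space, there is an $i$ with $\rho_\G([y_i],[x])\le\epsilon$. By the standing assumption that the minimum in the definition of $\rho_\G$ is attained (automatic here because $\G$ is finite), there is some $g\in\G$ with $\rho(g\cdot y_i,x)=\rho_\G([y_i],[x])\le\epsilon$. I also need that this minimum does not depend on which representatives are used. This follows from the $\G$-invariance of $\rho$, which is exactly why $\rho_\G$ is well defined as a metric on orbits. Since $g\cdot y_i\in C$, the point $x$ is within $\epsilon$ of $C$. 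Hence $\N(K,\rho,\epsilon)\le |C|\le |\G|\,N$, which is the claim.

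No step is a real obstacle. The only point needing care is the representative-independence of $\rho_\G$, i.e., that $\min_g\rho(g\cdot y_i,x)$ equals the quotient distance regardless of which points of the two orbits are chosen. This follows from the $\G$-invariance of $\rho$ as noted when the module was defined. In the finite case the minimum over $\G$ trivially exists, so no compactness argument is required.
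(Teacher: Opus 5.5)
Your proposal is correct and follows the paper's own proof essentially step for step. Both arguments lift an $\epsilon$-cover of $K/\G$ to the union of the full orbits of its representatives, bound the size of that union by $|\G|$ times the size of the cover, and use the attained minimum in $\rho_\G$ to locate a suitable orbit element within $\epsilon$ of each $x\in K$.
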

\section{Canonizations and Continuity}\label{sec:evaluation}
In the previous section, we established upper and lower bounds for the covering numbers of canonizations. Our focus in this section is to study when these bounds are achieved.
\begin{definition}
Let $(K,\rho,\G)$ be a module and let $\epsilon$ be a positive number. We say that $c$ is $(K,\epsilon) $ \emph{optimal} if it attains the lower bound in \eqref{eq:coveringIneq}, namely 
$
\N(c(K),\rho,\epsilon) = \N(K/\G,\rho_{\G},\epsilon).
$
We say that $c$ is $(K,\epsilon) $ \emph{poor}  if it attains the upper bound in \eqref{eq:coveringIneq}, namely 
$
\N(c(K),\rho,\epsilon)=\N(K,\rho,\epsilon).
$
\end{definition}
We will next show conditions for optimal and poor canonizations, and show how these conditions relate to the continuity of the canonization. 

\textbf{Optimal canonizations:}
First, we prove that canonizations, which are isometries, are optimal. We say that a canonization $c:K\to K $ is an \emph{isometry} if $\rho(c(x),c(y))=\rho_{\G}([x],[y])  $ for all $x,y\in K$.
\begin{restatable}{proposition}{isometry}\label{prop:isometry}[proof in \ref{Proof_isometry}]
    Any isometric canonization $c:K\to K$ is $(K,\epsilon) $ optimal for all $\epsilon>0$.
\end{restatable}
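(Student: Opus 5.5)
By Proposition \ref{prop:covering_bounds} we already have $\N(K/\G,\rho_{\G},\epsilon)\leq \N(c(K),\rho,\epsilon)$. So it suffices to prove the reverse inequality $\N(c(K),\rho,\epsilon)\leq \N(K/\G,\rho_{\G},\epsilon)$. The plan is to show that the map $\phi:K/\G\to c(K)$ given by $\phi([x])=c(x)$ is an onto isometry between $(K/\G,\rho_\G)$ and $(c(K),\rho)$. Covering numbers are then transported from the quotient to $c(K)$.

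First I will check that $\phi$ is well defined. Property (ii) of a canonization gives $c(y)=c(x)$ for every $y\in[x]$, so $\phi([x])$ depends only on the orbit. Surjectivity onto $c(K)$ is immediate, since every point of $c(K)$ has the form $c(x)=\phi([x])$. The isometry hypothesis says exactly that $\rho(\phi([x]),\phi([y]))=\rho(c(x),c(y))=\rho_\G([x],[y])$. In fact only the inequality $\rho(c(x),c(y))\le\rho_\G([x],[y])$, i.e.\ that $\phi$ is $1$-Lipschitz, is needed here.

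Next, take a minimal $\epsilon$ cover $\{[y_1],\dots,[y_N]\}$ of $K/\G$ with $N=\N(K/\G,\rho_\G,\epsilon)$. I claim $\{c(y_1),\dots,c(y_N)\}\subseteq c(K)$ is an $\epsilon$ cover of $c(K)$. Let $z=c(x)\in c(K)$. Choose $i$ with $\rho_\G([x],[y_i])\le\epsilon$. Then $\rho(z,c(y_i))=\rho_\G([x],[y_i])\le\epsilon$. Hence $\N(c(K),\rho,\epsilon)\le N$, and combined with Proposition \ref{prop:covering_bounds} this gives equality for every $\epsilon>0$.

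There is no serious obstacle. The one point to be careful about is that the cover points must lie in $c(K)$ itself, since covering numbers here are defined with centers inside the set. This holds automatically because the new centers are the images $c(y_i)$. I will also note that well-definedness of $\phi$ relies on the invariance property (ii) of canonizations rather than on the isometry assumption.
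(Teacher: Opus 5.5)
Your proposal is correct and follows essentially the same route as the paper: the lower bound comes from Proposition~\ref{prop:covering_bounds}, and the upper bound comes from pushing a minimal $\epsilon$ cover of $K/\G$ forward through $c$, using $\rho(c(x),c(z))=\rho_\G([x],[z])$. Your remarks that well-definedness rests on property (ii) of a canonization, and that only the inequality $\rho(c(x),c(y))\le\rho_\G([x],[y])$ is actually needed, are accurate refinements of the same argument.
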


We next use this claim to give several examples of  optimal canonizations:
\begin{restatable}{example}{isometries}\label{prop:isometries}
    The following canonizations are isometries (see proof in \ref{Proof_isometries})
\begin{enumerate}
\item The canon. $c(x)=|x|$ with respect to the action of $\{-1,1\} $ on $\R$ and the standard metric.
\item The canon. $\mathrm{sort}:\RR^n \to \RR^n $ with respect to the action of permutations and a metric $\rho$ induced by a $p$ norm.
\item The centralization canon. defined for point clouds in $\RR^{d\times n}$ with respect to the action of translation by vectors in $\RR^d$, with the metric $\rho$ induced by the Frobenius norm.
\end{enumerate}
\end{restatable}

In Proposition \ref{prop:isometryLip} in the appendix, we prove that a canonization is isometric if and only if it is $1$-Lipschitz in the standard sense. In particular, this means that an isometric canonization is always continuous. As a result, in the setting where there can be no continuous canonizations, such as those discussed in  \cite{dym2024equivariant,pmlr-v235-baker24a}, there can be no isometric canonizations.  
\textbf{Poor canonizations:} We next show that discontinuities in canonizations can lead to poor canonizations. This occurs under two conditions: firstly, the canonization is `strongly discontinuous' in the sense that it has $|G|$ partial limits. Secondly, the domain $K$ of the group action is chosen adversarially to exploit the discontinuity of the canonization. 
\begin{restatable}{proposition}{poorcanonization}[proof in \ref{poor_proof}]
\label{prop:poor}
    Let $(V,\rho,\G)$ be a module, where $\G$ is a finite group. Let $c$ be a canonization that has $|G|$ partial limits at a point $x\in V$. Then for any small enough $\epsilon>0$ there exists a $\G$ invariant set $K\subseteq V$ such that $\N(c(K),\rho,\epsilon)=\N(K,\rho,\epsilon)=|\G| \cdot \N(K/\G,\rho,\epsilon)$. 
\end{restatable}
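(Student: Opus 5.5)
We need to interpret "$|G|$ partial limits at $x$": there exist sequences $x_n^{(g)}\to x$, one for each $g\in\G$, with $c(x_n^{(g)})\to g\cdot x$, so that all $|\G|$ elements of the orbit of $x$ arise as limits. I will also assume $x$ has trivial stabilizer, so the points $g\cdot x$ are distinct; this seems implicit, since otherwise there would be fewer than $|\G|$ distinct limits. Let $\delta=\min_{g\neq h}\rho(gx,hx)>0$ and take $\epsilon<\delta/4$, say.

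\textbf{Construction of $K$.} For each $g\in\G$ pick a point $y_g$ from the sequence $x_n^{(g)}$ with $\rho(y_g,x)<\eta$ and $\rho(c(y_g),gx)<\eta$, where $\eta$ is small compared to $\epsilon$. Set $K=\bigcup_{g}[y_g]$, a finite $\G$-invariant set, hence compact. Then $c(K)=\{c(y_g)\}_g$, and these points lie within $\eta$ of the distinct orbit points $gx$. It may happen that $y_g$ and $y_h$ lie in the same orbit; then $c(y_g)=c(y_h)$, which contradicts their being near different orbit points. Hence the $|\G|$ orbits $[y_g]$ are distinct.

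\textbf{Counting covers.} The quotient $K/\G$ consists of $|\G|$ orbits whose representatives are all within $\eta$ of $x$, so $\rho_\G([y_g],[y_h])\le 2\eta\le\epsilon$. One ball therefore covers, and $\N(K/\G,\rho_\G,\epsilon)=1$. Next, $c(K)$ consists of $|\G|$ points that are pairwise at distance at least $\delta-2\eta>2\epsilon$. No $\epsilon$-ball can contain two of them, so $\N(c(K),\rho,\epsilon)=|\G|$. Finally, Proposition \ref{prop:covering_bounds} and Proposition \ref{prop:cardinality} give
\[
|\G|=\N(c(K),\rho,\epsilon)\le \N(K,\rho,\epsilon)\le |\G|\,\N(K/\G,\rho_\G,\epsilon)=|\G|,
\]
so all the claimed equalities hold.

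\textbf{Main obstacle.} The delicate step is the covering-number lower bound for $c(K)$. A cover is allowed to use centers anywhere, so the argument must rely on points being separated by more than $2\epsilon$ rather than by more than $\epsilon$. This is why I choose $\epsilon<\delta/4$ and $\eta$ small. The hypothesis that the stabilizer is trivial, which is needed for $\delta>0$, is the other point to verify against the paper's definition of partial limits. If the paper instead intends a larger, non-degenerate $\N(K/\G)$, I would replace each single orbit by a union of translated copies of this construction, placed far apart; since covering numbers of far-separated pieces add, the ratio $|\G|$ is preserved.
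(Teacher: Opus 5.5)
Your proof is correct and follows the same overall strategy as the paper's. You concentrate $K$ around the orbit of $x$ so that $\N(K/\G,\rho_\G,\epsilon)=1$, extract $|\G|$ well-separated points of $c(K)$ near the points $g\cdot x$, and cap $\N(K,\rho,\epsilon)$ at $|\G|$.

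The difference lies in the adversarial set. The paper takes $K=\bigcup_{g\in\G}B(g\cdot x,\epsilon)$, which gives a ``fat'' domain with nonempty interior and makes $\N(K,\rho,\epsilon)\le|\G|$ immediate from the balls. You take the finite set $K=\bigcup_g[y_g]$, built from one term of each converging sequence. Your choice buys precision:
\begin{itemize}
\item $c(K)=\{c(y_g)\}_g$ is an explicit set of $|\G|$ points, so its covering number is computed exactly. The paper instead asserts that the partial limits $g\cdot x$ themselves lie in $c(K)$, which in general holds only for the closure. For example, for $c_1$ from Example \ref{ex:sign_poor} at $x=1/2$, the point $1/2$ is not in $c_1(K)$.
\item The paper argues from separation $>\epsilon$. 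Your choice $\epsilon<\delta/4$ gives separation $>2\epsilon$, which is what a clean counting argument needs.
\item Your $K$ is compact, so Proposition \ref{propGen} applies to it directly.
\end{itemize}
One small tightening: you can read off $\N(K,\rho,\epsilon)=|\G|$ directly. $K$ splits into the clusters $\{h\cdot y_g\}_{g\in\G}$, each of diameter $<2\eta\le\epsilon$ around the point $h\cdot x$, and these clusters are pairwise more than $2\epsilon$ apart. This avoids relying on the monotonicity $\N(c(K),\rho,\epsilon)\le\N(K,\rho,\epsilon)$. That monotonicity is not automatic under the paper's definition, where cover centers must lie in the set being covered. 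Your closing remark about a non-degenerate quotient is unnecessary, since the paper's $K$ also has quotient covering number $1$.

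The one step you assume rather than prove is that the partial limits at $x$ are exactly the orbit points $g\cdot x$, and that these are distinct. This takes two lines, as in the paper:
\begin{itemize}
\item Suppose $x_n\to x$ and $c(x_n)\to L$, and write $c(x_n)=g_n\cdot x_n$.
\item Since $\G$ is finite, pass to a subsequence on which $g_n=g$ is constant.
\item By $\G$-invariance of $\rho$, $c(x_n)=g\cdot x_n\to g\cdot x$, so $L=g\cdot x$.
\end{itemize}
Hence the set of partial limits is contained in $[x]$, which has at most $|\G|$ elements. Having $|\G|$ partial limits therefore forces $[x]$ to consist of exactly $|\G|$ distinct points, each a partial limit. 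This gives trivial stabilizer and $\delta>0$, which is precisely your interpretation of the hypothesis. Add this and the argument is complete.
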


Based on this proposition, we give two examples of poor canonizations:

\begin{example}\label{ex:sign_poor}
Consider the action of $\G=\{-1,1\}$ on $\RR $. Consider the canonizations
$$c_1(t)=\twopartdef{|t|}{|t|>1/2}{-|t|}{|t|\leq 1/2}, \quad  c_\infty(t)=\twopartdef{|t|}{t\in \mathbb{Q}}{-|t|}{t\not \in \mathbb{Q}} $$
visualized in Figure \ref{fig:1d_canonizations}. Both of these canonizations are discontinuous and have a point with $|G|=2 $ partial limits. Therefore, by  Proposition \ref{prop:poor} for every small enough $\epsilon>0$ we can choose an adversarial domain for which the canonization is poor. For $c_1$ we can choose for every $\epsilon\in (0,1)$ the set  $K=B_\epsilon(-\frac{1}{2})\cup B_\epsilon(\frac{1}{2})$ and then $\N(K,\rho,\epsilon)=\N(c_1(K),\rho,\epsilon)=2$ and so the canonization is poor. 
In the canonization $c_\infty$ the discontinuities are much more extreme. In this case, we can choose a non-adversarial domain $K=[-1,1]$, and the canonization will be poor for any $\epsilon>0$ because the covering number of a set and its closure are the same, and in our case, the closure of $c_{\infty}(K)$ will be all of $K$. 
\end{example}

\begin{figure}[h]
    \centering
      \caption{\it Visualization of three canonizations for the action of $\G=\{-1,1\}$ on $\mathbb{R}$. The natural  canonization $c(x)=|x|$, which is shown to be optimal (Example \ref{prop:isometries}), and two discontinuous canonizations, which on an appropriate domain are poor (Example \ref{ex:sign_poor}).}
    \label{fig:1d_canonizations}
    \includegraphics[width=0.8\linewidth]{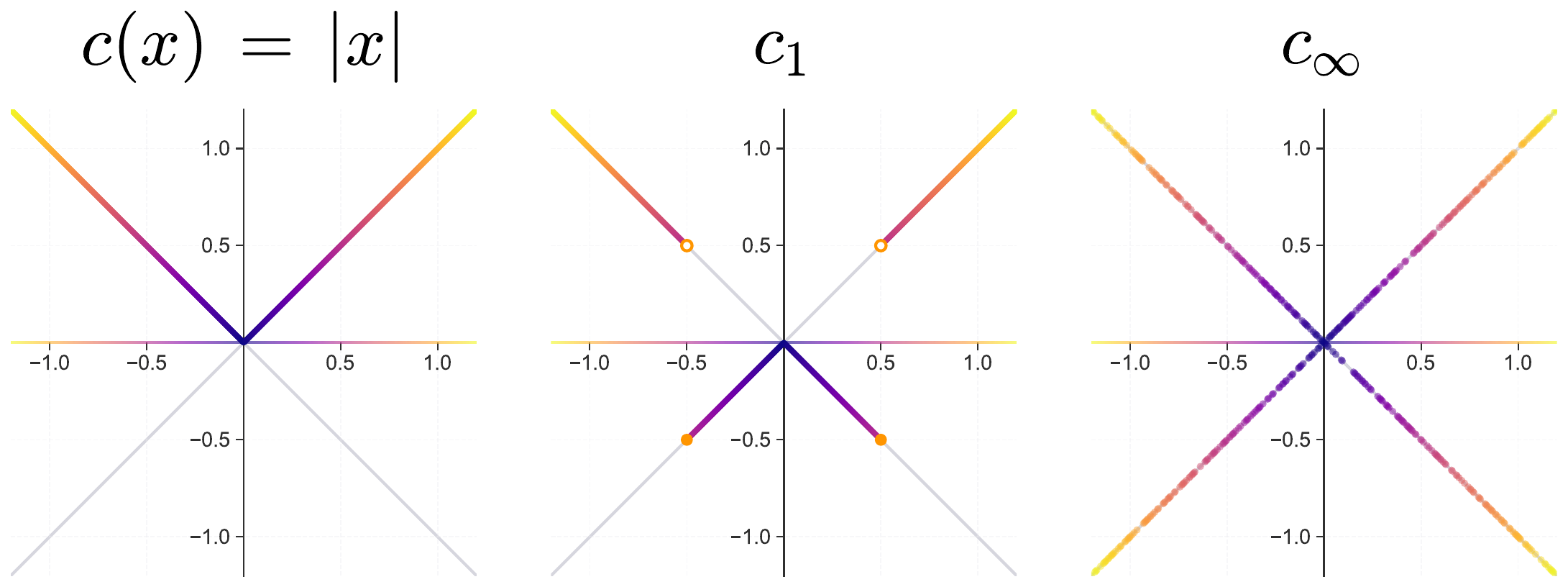}   
\end{figure}

\begin{example}\label{ex:multisort}
Consider the action of the permutation group $\G=S_n$ on the space of matrices $\RR^{d\times n}$ by permuting the columns of the matrices. We call such matrices `point clouds'. We assume that $n,d\geq 2$ (the case $d=1$ sorting is an isometry by Example \ref{prop:isometries}). A natural canonization in this scenario is $\csort$, which permutes the columns of the matrices so that the first row of the matrix is sorted from small to large, and in the event of ties in the first row, sorts according to the second row, and continues in this fashion until a permutation is uniquely defined. The canonization $\csort$ is discontinuous. Moreover, the canonization has $n!=|\G|$ partial limits at any matrix  $M\in \RR^{d \times n} $ satisfying that the matrix does not contain two identical columns, but $M_{1,1}=M_{1,2}=\ldots=M_{1,n}$. Thus, according to Proposition \ref{prop:poor}, for every small enough $\epsilon>0$ there exists an adversarial domain for which the canonization is poor. 
\end{example}
\begin{conclusionbox}{Conclusion 2}
We show that isometric canonizations are optimal, while discontinuous canonizations can yield poor canonizations. These results give a rigorous justification for the importance of continuous canonizations.
\end{conclusionbox}

\section{Case study: Permutation Canonizations}\label{sec:case_study}

In Example \ref{ex:multisort}, we showed that lexicographical sorting is a poor canonization on an adversarial domain. In this section, we analyze the covering numbers of lexicographical sorting when considering a fixed, `natural' domain of point clouds $K=[0,1]^{d\times n}$. Throughout this section, we will consider the action of the group $\G=S_n$, and the distance induced by the element-wise infinity norm $\rho_\infty(X,Y)=\|X-Y\|_\infty $. We will show that (for fixed $\epsilon,d$) the covering number of Lexsort grows exponentially in $n$, the number of points in the point cloud, while the covering number of the quotient space only grows polynomially in $n$. Moreover, we will consider an alternative canonization based on Hilbert curves, and show that its covering numbers also grow only polynomially in $n$. The significance of this result is that it gives a theoretical justification for the choice of  Point Transformer V3 \cite{wu2024ptv3} to use Hilbert canonizations for point cloud serialization.   

We begin by proving the exponential growth of Lexsort canonizations:

\begin{restatable}{lemma}{lexsortcovering}[proof in \ref{lexsortcovering}]
\label{lem:lexsortcovering}
Let $k,d,n\geq 2$ be natural numbers, and set $\epsilon=1/(2k), K=[0,1]^{d\times n}$ and $\G=S_n$. Then the covering number of the lexicographical sorting canonization satisfies 
\begin{equation}
\label{eq:exp}
\left( \frac{1}{2\epsilon}\right)^{(d-1)\cdot n + 1}\leq \N(\csort(K),\rho_\infty,\epsilon).
\end{equation}
\end{restatable}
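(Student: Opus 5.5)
We will prove the lower bound by exhibiting a large $2\epsilon$-separated family inside $\csort(K)$, i.e. a packing. If $P\subseteq \csort(K)$ has pairwise distances strictly greater than $2\epsilon$, then no closed $\epsilon$-ball (centered anywhere in $\csort(K)$) can contain two points of $P$. Hence $\N(\csort(K),\rho_\infty,\epsilon)\ge |P|$. With $\epsilon=1/(2k)$ we have $1/(2\epsilon)=k$, so the target is a family of $k^{(d-1)n+1}$ lexsorted matrices whose pairwise $\rho_\infty$ distances exceed $1/k$.

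\textbf{Construction.} The idea is to make the first row constant, so that lexsort imposes no constraint coming from the remaining rows. That frees the $(d-1)n$ entries of rows $2,\dots,d$.
\begin{itemize}
\item Fix the grid $G=\{0,\tfrac{1}{k-1},\dots,1\}$, which has $k$ points with spacing $1/(k-1)>1/k$ (this is where $k\ge 2$ is used).
\item Choose $a\in G$ and set the first row to $(a,a,\dots,a)$. This contributes the extra factor $k$ in the exponent.
\item Let rows $2,\dots,d$ be arbitrary elements of $G^{(d-1)\times n}$ with one restriction: the second row $(b_1,\dots,b_n)$ must be non-decreasing.
\end{itemize}
Such a matrix is already lexsorted, so it is a fixed point of $\csort$ and lies in $\csort(K)$. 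Two distinct matrices in this family differ in some entry, and since all entries lie on $G$, they differ there by at least $1/(k-1)>2\epsilon$. So the family is a packing.

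\textbf{Main obstacle: counting.} The monotonicity restriction on the second row costs us. The number of non-decreasing sequences in $G^n$ is $\binom{n+k-1}{n}$, not $k^n$. The full count is therefore $k\cdot k^{(d-2)n}\binom{n+k-1}{n}$, which falls short of $k^{(d-1)n+1}$. I see two ways to recover the lost factor.

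\emph{First attempt.} Exploit ties in the second row as well: take row 2 constant too, and let rows $3,\dots,d$ be free. This only moves the problem, since it loses a whole row.

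\emph{Better approach.} Do not require the matrices to be sorted. Instead, perturb the first row so that lexsort is determined by row 1 alone, leaving rows $2,\dots,d$ completely free.
\begin{itemize}
\item Set $M_{1,j}=a+j\eta$ for a tiny $\eta>0$, with $a$ ranging over a grid of $k$ values spaced slightly more than $1/k$ apart. This fits in $[0,1]$ if the spacing is chosen as $(1-n\eta)/(k-1)>1/k$.
\item Since the first row is strictly increasing, every such matrix is lexsorted, whatever rows $2,\dots,d$ are.
\item Rows $2,\dots,d$ range freely over $G^{(d-1)\times n}$.
\end{itemize}
This gives $k\cdot k^{(d-1)n}$ matrices in $\csort(K)$. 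Two distinct members either differ in $a$, so their first rows differ by more than $1/k$ entrywise, or they differ in some free entry by at least $1/(k-1)>1/k$. Either way the pairwise distance exceeds $2\epsilon$. This yields exactly the bound \eqref{eq:exp}.

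The remaining check is the strict inequality $(1-n\eta)/(k-1)>1/k$, which holds for $\eta<1/(nk)$.
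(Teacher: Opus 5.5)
Your final construction is correct. For $0<\eta<1/(nk)$, take matrices with first row $(a+\eta,a+2\eta,\dots,a+n\eta)$, with $a$ on your grid of spacing $(1-n\eta)/(k-1)>1/k$, and rows $2,\dots,d$ in $G^{(d-1)\times n}$. These matrices lie in $K$, are fixed by $\csort$, and are pairwise more than $2\epsilon=1/k$ apart in $\rho_\infty$. So no closed $\epsilon$-ball contains two of them, and $\N(\csort(K),\rho_\infty,\epsilon)\ge k^{(d-1)n+1}$.

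The paper rests on the same geometric idea but takes a different technical route. It works with the whole set $U=\{X\in K \mid X_{11}=\dots=X_{1n}\}$ and notes that $U\subseteq\overline{\csort(K)}$, by exactly the small first-row perturbation you make explicit. It then chains $\N(\csort(K),\rho_\infty,\epsilon)=\N(\overline{\csort(K)},\rho_\infty,\epsilon)\ge\N(U,\rho_\infty,\epsilon)$. Here $U$ is an isometric copy of $[0,1]^{(d-1)n+1}$, whose covering number comes from the volume argument of Lemma~\ref{covering_cube}.

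Your packing is in effect a finite discretization of $U$ pushed into $\csort(K)$ itself, which keeps the argument fully elementary. You never need the closure step. You also never need monotonicity of covering numbers under inclusion, which is not automatic under the paper's convention that centers lie in the covered set; it works there only because the volume bound ignores where the centers are. Your bound also holds verbatim for covers with arbitrary centers.

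The paper's route is shorter and gives the exact covering number of $U$. It ties the bound directly to the set of ``severe discontinuities'' of lexsort, which is the point of that section. It also avoids the strict-spacing bookkeeping ($1/(k-1)>1/k$, $\eta<1/(nk)$) that your packing requires.

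One small slip in your discarded first attempt: a matrix with constant first row and merely non-decreasing second row need not be lexsorted, because ties in the second row are broken by the third row. This does not affect your final argument.
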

\begin{proof}[Proof idea]
The Lexsort canonization is ineffective on the set of its `severe discontinuities'- point clouds whose first row is constant. These point clouds form a $(d-1)\cdot n+1 $ dimensional hypercube. The covering number of $\csort(K)$ is upper bounded by the covering number of this hypercube, which is the left-hand side of \eqref{eq:exp}.
\end{proof}
Next, we prove that for fixed $\epsilon,d$,  the quotient space's covering number grows polynomially in $n$:
\begin{restatable}{lemma}{quotientcovering}
\label{lem:quotient_covering}
Let $n,d$ be natural numbers and $\epsilon\in (0,1)$. Set $K=[0,1]^{d\times n}$ and $\G=S_n$. Then the covering number of the quotient space satisfies
\begin{equation}
\label{eq:combinatorial}
\N(K/\G,\rho_{\G},\epsilon)
\leq
\binom{n+k^{d}-1}{n},
\text{ where }
k=\left\lceil \frac{1}{2\epsilon} \right\rceil .
\end{equation}
\end{restatable}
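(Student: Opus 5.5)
We construct an explicit $\epsilon$-cover of the quotient space by quantizing each point of the cloud to a grid and forgetting the order. Since $k=\lceil 1/(2\epsilon)\rceil$, we have $1/(2k)\le \epsilon$. Partition $[0,1]$ into the $k$ intervals $I_j=[(j-1)/k, j/k]$ for $j=1,\ldots,k$, and let $m_j=(2j-1)/(2k)$ be the midpoint of $I_j$. Every $t\in[0,1]$ then lies within $1/(2k)\le\epsilon$ of some $m_j$. Taking products gives a grid $P=\{m_{j_1},\ldots,m_{j_k}\}^d\subset[0,1]^d$ of $k^d$ points. Every $x\in[0,1]^d$ is within $\ell_\infty$ distance $\epsilon$ of some grid point $p(x)\in P$, obtained by choosing the nearest midpoint coordinatewise.

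Next define the candidate cover. Let $C\subseteq K/\G$ be the set of orbits $[Y]$ where $Y\in\RR^{d\times n}$ has every column in $P$. Since $\G=S_n$ only permutes columns, an orbit $[Y]$ is determined by the multiset of its $n$ columns drawn from the $k^d$-element set $P$. The number of such multisets is the stars-and-bars count $\binom{n+k^d-1}{n}$, so $|C|\le\binom{n+k^d-1}{n}$.

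Finally, check the covering property. Given $X\in K$ with columns $x_1,\ldots,x_n$, let $Y$ have columns $p(x_1),\ldots,p(x_n)$, so that $[Y]\in C$. Since $\rho_\infty$ is the element-wise maximum, $\rho_\infty(X,Y)=\max_i\|x_i-p(x_i)\|_\infty\le\epsilon$. Taking the identity element in the definition of $\rho_\G$ gives $\rho_\G([X],[Y])\le\rho_\infty(X,Y)\le\epsilon$. Hence $C$ is an $\epsilon$-cover of $K/\G$, and \eqref{eq:combinatorial} follows.

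None of the steps is hard. The only points needing care are the grid spacing (the midpoint grid of mesh $1/k$ gives radius $1/(2k)\le\epsilon$) and the identification of orbits of grid point clouds with multisets, which is what produces the binomial coefficient.
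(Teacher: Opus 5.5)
Your proof is correct and follows the paper's argument: you build the $k^d$ midpoint grid in $[0,1]^d$ of radius $1/(2k)\le\epsilon$, push the resulting grid of point clouds through the quotient map, and count orbits as multisets of columns via stars and bars to get $\binom{n+k^d-1}{n}$. One small notational slip: the grid should read $P=\{m_1,\ldots,m_k\}^d$.
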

\begin{proof}[Proof idea]
Set $k=\lceil 1/(2\epsilon) \rceil $. The set $K$ can be covered by $k^d$ hypercubes of diameter $k$ and radius $1/(2k)\leq \epsilon $. Let $C$ denote the collection of centers of these hypercubes, and after applying the quotient map  $\pi(c) = [c]$, we obtain points $\pi(C)$ in the quotient space which are an $\epsilon$ cover of $K/\G$. Note that points in $C$ which are permutations of each other are mapped by $\pi$ to the same orbit. Therefore, the cardinality of the cover $\pi(C)$ is the number of equivalence classes with respect to the relation on $C$ defined by the group:  $c_1 \equiv c_2 \iff \exists \pi \in S_n : c_1 = \pi(c_2)$. As explained in more detail in the proof, the number of equivalence classes is the combinatorial expression in \eqref{eq:combinatorial}.  
\end{proof}

\subsection{Hilbert Curve Canonization}
Finally, we will provide an intuitive explanation of the Hilbert canonization and formulate the claim that its covering number exhibits only polynomial growth with respect to the point cloud dimensionality. For a full definition of the canonization and proof, see Appendix \ref{app:hilbert}.
\begin{figure}[h]
    \centering
        \caption{\it The Hilbert curves $H_m$ for $d=2$ and $m=1,2,3$ induce an ordered on a grid in $[0,1]^d $ with $2^{md}$ points. On the right we see the discontinuous Lexsort ordering on the grid.}
    \includegraphics[width=\linewidth]{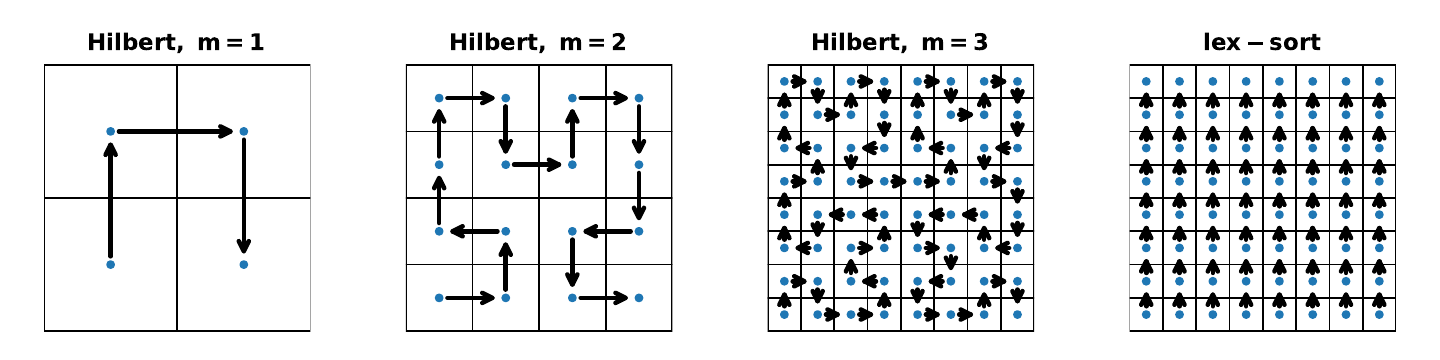}
    \label{fig:hilbert}
\end{figure}

The Hilbert curve is a space-filling curve, mapping the unit interval $[0,1]$ continuously onto a hypercube $[0,1]^d $. It is defined as a limit of curves $H_m:[0,1]\to [0,1]^d $ which are not onto but whose image becomes more and more `dense' in $[0,1]^d $ as $m$ grows, as shown in Figure \ref{fig:hilbert} for the case where $d=2$. The curves $H_m$ can be used to define an order on $[0,1]^d$ (at least on points in the image of the curve) and thus induce a canonization. Intuitively, the advantage of this order over the Lexsort ordering is that this ordering is continuous (see Figure \ref{fig:hilbert}) again. However, while previous work \cite{wu2024ptv3,hilbertnet} cited this continuity as justification for  Hilbert ordering, a formal justification of why this continuity is important for learning has not yet been provided. We address this caveat by showing the polynomial growth of the covering number of Hilbert canonizations:
\begin{restatable}{theorem}{hilbert}\label{thm:hilbert}
Let $d,n,m\geq 2$ be natural numbers, let $\epsilon\in (0,1)$, and assume that $\epsilon>2^{-m-1} $. Set $ K=[0,1]^{d\times n}$ and $\G=S_n$.    Then the covering number of the Hilbert canonization $c_m$ satisfies 
$$\N(c_m(K),\rho_\infty,\epsilon)\leq  \binom{n+\lceil 1/ (2\delta) \rceil-1}{n},   \text{ where } \delta= \frac{\left(\epsilon-2^{-m-1}\right)^d}{4}$$
\end{restatable}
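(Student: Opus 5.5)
The plan is to reduce the covering of $c_m(K)$ to a covering of sorted one-dimensional parameter vectors, and then transfer that covering back to point clouds using the Hölder regularity of the Hilbert curve. I will use the following picture of $c_m$ from Appendix \ref{app:hilbert}. Each point $x\in[0,1]^d$ is assigned a Hilbert parameter $t_m(x)\in[0,1]$, namely the position along $H_m$ of the grid cell (side $2^{-m}$) containing $x$. The canonization $c_m$ permutes the columns of $X$ so that $t_m(x_{1})\le \dots\le t_m(x_{n})$, with some fixed tie-breaking rule.

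\textbf{Step 1: quantize the sorted parameter vectors.} The map $X\mapsto \tau(X)=(t_m(x_1),\dots,t_m(x_n))$, applied to $c_m(X)$, lands in the set of sorted vectors of $[0,1]^n$. This set is exactly the quotient of $[0,1]^{1\times n}$ by $S_n$, realized by sorting. Applying Lemma \ref{lem:quotient_covering} with $d=1$ and accuracy $\delta$ gives a $\delta$-cover $\{s^{(1)},\dots,s^{(N)}\}$ of sorted vectors, with $N\le\binom{n+\lceil 1/(2\delta)\rceil-1}{n}$. The cover can be taken to consist of sorted grid vectors, and sorting is an isometry by Example \ref{prop:isometries}. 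I then group the elements of $c_m(K)$ according to which $s^{(j)}$ their parameter vector is $\delta$-close to, coordinatewise in the sorted order. From each nonempty group I pick one representative canonized cloud $Y^{(j)}$. These representatives are the candidate $\epsilon$-cover, so it has at most $N$ elements, which is the bound in the theorem.

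\textbf{Step 2: show the cover works.} Take $X$ with $c_m(X)$ in group $j$, and write $c_m(X)=(x_1,\dots,x_n)$ and $Y^{(j)}=(y_1,\dots,y_n)$. For each $i$, the parameters satisfy $|t_m(x_i)-t_m(y_i)|\le 2\delta$, since both are within $\delta$ of $s^{(j)}_i$. I then need a Hölder estimate for $H_m$ in the $\infty$-norm of the form
$$\|H_m(t)-H_m(s)\|_\infty\le (2|t-s|)^{1/d}\quad\text{up to cell resolution,}$$
together with the fact that each $x_i$ lies within $2^{-m-1}$ (half a cell side) of the curve point $H_m(t_m(x_i))$. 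The Hölder estimate comes from the self-similar structure of the curve: parameter intervals of length $2^{-jd}$ map into single subcubes of side $2^{-j}$, and consecutive subcubes are adjacent. Hence two parameters within $2^{-jd}$ of each other land in at most two adjacent cubes, giving distance at most $2\cdot 2^{-j}$. Putting the pieces together with the triangle inequality should give
$$\|x_i-y_i\|_\infty\le (4\delta)^{1/d}+2^{-m-1}=\epsilon,$$
which is precisely what the choice $\delta=(\epsilon-2^{-m-1})^d/4$ is designed to produce. Taking the maximum over $i$ gives $\rho_\infty(c_m(X),Y^{(j)})\le\epsilon$.

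\textbf{Main obstacle.} I expect the hardest step to be the Hölder estimate with exactly the constant that matches $\delta$. The difficulty is tracking dyadic boundaries: an interval of length $2\delta$ may straddle two level-$j$ parameter blocks, and the grid discretization at level $m$ must be handled consistently with the $2^{-m-1}$ slack. I would first prove the clean statement that $|t-s|\le 2^{-jd}$ implies distance $\le 2^{1-j}$ for $j\le m$. I would then choose $j$ maximal with $2\cdot 2\delta\le 2^{-jd}$ and absorb the rounding into the factor $4$ in the definition of $\delta$. If the constants do not line up exactly, the fallback is to redefine the representatives as curve points $H_m(s^{(j)}_i)$ rather than sample clouds; this yields a cover of the $\epsilon$-neighbourhood, which is allowed since the covering centres need not lie in $c_m(K)$ up to a factor-$2$ adjustment already built into $\delta$.
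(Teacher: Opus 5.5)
\textbf{There is a genuine gap in Step~2.} Your architecture matches the paper's: cover the sorted Hilbert indices with the $d=1$ case of Lemma~\ref{lem:quotient_covering}, transfer back through a H\"older bound for $H_m$, and pay $2^{-m-1}$ for rounding. But Step~2 does not go through.

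The estimate you need, $\|H_m(t)-H_m(s)\|_\infty\le(2|t-s|)^{1/d}$, is false. Take the standard planar Hilbert curve with $m=2$. The cells of Hilbert index $1$ and $5$ are $[\tfrac14,\tfrac12)\times[0,\tfrac14)$ and $[0,\tfrac14)\times[\tfrac34,1]$. So $|t-s|=\tfrac14$, while their centroids are at $\infty$-distance $\tfrac34>2^{-1/2}$. This configuration recurs self-similarly at every finer scale, so no additive cell-size slack rescues it.

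Your own dyadic sketch gives only $\|H_m(t)-H_m(s)\|_\infty\le 4|t-s|^{1/d}$, which is exactly Lemma~\ref{lem:Holder}. Write $\eta:=\epsilon-2^{-m-1}$. Choosing $j$ maximal with $4\delta\le 2^{-jd}$ yields $2^{1-j}<4\eta$, not $(4\delta)^{1/d}=\eta$. The factor $4$ in $\delta$ sits under a $d$-th root and cannot absorb this. Your fallback only addresses the factor $2$ from $2\delta$ and the rounding error.

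On rounding: with $Y^{(j)}\in c_m(K)$, the triangle inequality costs $2^{-m-1}$ twice, once for $x_i$ and once for $y_i$. The paper avoids this by first replacing $X$ with its rounded grid tuple in $\mathbb{B}^n$, which $c_m$ permutes in the same way. Note also that the paper's step $(*)$ uses Lemma~\ref{lem:Holder} with the same $\delta$ and has the same mismatch: pushing a $\delta$-cover forward via a constant-$4$ H\"older bound needs $\delta\le(\eta/4)^d$. So the stated constant cannot be reached by sharpening the H\"older route.

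\textbf{The missing idea} is to quantize Hilbert indices along \emph{dyadic blocks} rather than along the uniform grid of Lemma~\ref{lem:quotient_covering}. This removes exactly the straddling problem you flagged, and no H\"older estimate is needed.

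Let $j\ge 0$ be minimal with $2^{-j-1}\le\epsilon$. Since $\epsilon>2^{-m-1}$, we get $j\le m$. When $j\ge 1$, minimality gives $2^{dj}<\epsilon^{-d}<2\eta^{-d}=1/(2\delta)$; when $j=0$, trivially $2^{dj}\le\lceil 1/(2\delta)\rceil$.

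By the nesting property of $H_m$, the level-$m$ cells inside each level-$j$ cube occupy a consecutive run of Hilbert indices, and these runs are ordered by the level-$j$ index. Hence the columns of $c_m(X)$ lie in level-$j$ cubes with nondecreasing indices $J_1\le\dots\le J_n$. Replacing each column by the centre $z_{J_i}$ of its cube moves it by at most $2^{-j-1}\le\epsilon$ in $\rho_\infty$.

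Each $z_J$ lies in a level-$m$ cell of cube $J$, so the tuple $(z_{J_1},\dots,z_{J_n})$ is already in canonical order and lies in $c_m(K)$. By Lemma~\ref{comb_proof} there are at most $\binom{n+2^{dj}-1}{n}\le\binom{n+\lceil 1/(2\delta)\rceil-1}{n}$ such tuples. This proves the theorem, and even avoids the $2^{-m-1}$ loss.
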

\begin{proof}[Proof idea]
When restricting to $n$-tuples in $K$ which are all in the image of $H_m$, the Hilbert canonization can be written as $H_m \circ \mathrm{sort} \circ H_m^{-1} $. Since $\mathrm{sort}$ (Proposition \ref{prop:isometries}) is an isometry, the image of $  \mathrm{sort} \circ H_m^{-1}$  can be covered efficiently. This cover can then be pushed forward to cover the image of   $H_m\circ \mathrm{sort} \circ H_m^{-1}$, using the fact that $H_m$ is Holder continuous.
\end{proof}

 Theorem \ref{thm:hilbert} shows that the covering number of the Hilbert canonization grows at worst polynomially in $n$, for fixed $\epsilon>0$. This is in contrast with the exponential growth of the sorting canonization, proving a substantial gap between them. 

\begin{conclusionbox}{Conclusion 3}
 Our analysis proves that the covering number of Hilbert canonizations is far superior to Lexsort canonization, giving a theoretical justification for its superior performance in practice. 
\end{conclusionbox}

To obtain more intuition for the gap between our bounds for the different covering numbers, in Section \ref{tab:bounds} in the appendix, we give a table showing these values computed for specific values of $\epsilon,d,n$.

\section{Experiments}
\label{sec:experiments}
We conduct several experiments to verify our theoretical insights and their manifestation in practical learning scenarios. We focus on validating the hierarchy between group averaging, canonization, and non-invariant models (Conclusion 1), and on the hierarchy between Hilbert and Lexsort canonizations (Conclusion 3).  

\paragraph{Coverage on ModelNet} In our first experiment, we considered the ModelNet \cite{wu20153d} point cloud classification task. Our first goal was to see whether the covering number hierarchy between the four different methods considered in Table \ref{tab:bounds} is preserved when replacing the large domain $[0,1]^{3\times n}$ with the `ModelNet point-cloud domain' $K$.   

As we do not have access to the `true point cloud' domain $K$, but only to the train and test sets, we do not compute covering numbers, but a related notion we call the coverage. For given sets $T$ (representing the test set) and $R$ (the train set), we define the coverage $\epsilon(R,T)$  to be the smallest $\epsilon$ such that $R$ is an $\epsilon$ cover of $T$. We compute the coverage by choosing, for each test sample $t\in T$, the minimum distance from $t$ to all training samples (for computational efficiency, we consider only samples with the same label as $t$). We call this number $q_t$. The coverage is then given by $\epsilon(R,T)=\max_{t\in T} q_t $. We call this measure the coverage or \emph{max coverage}. We also consider a more robust measure, the \emph{mean coverage} $\epsilon_{\mathrm{mean}}(R,T)=\mathrm{mean} \{ q_t| t\in \mathrm{Test} \} $.

\begin{table}[h]
\centering
\caption{\it Estimates of covering number on ModelNet via mean and max coverage, when using Euclidean distances, Lexsort or Hilbert canonizations, and the group distance.}
\label{tab:distances_both}
\begin{tabular}{lcccc}
\toprule
 & \multicolumn{2}{c}{\textbf{Mean coverage} $\downarrow$} 
 & \multicolumn{2}{c}{\textbf{Max coverage} $\downarrow$} \\
\textbf{Method} 
& \textbf{ModelNet10} 
& \textbf{ModelNet40} 
& \textbf{ModelNet10} 
& \textbf{ModelNet40} \\
\midrule
Euclidean distance & 0.4545 & 0.3821 & 0.629  & 0.7304 \\
Lexsort distance    & 0.3770 & 0.2988 & 0.5407 & 0.5550 \\
Hilbert distance & 0.2013 & 0.1955 & 0.5283 & 0.5490 \\
Group distance   & \textbf{0.0917} & \textbf{0.0957} & \textbf{0.2742} & \textbf{0.4880} \\
\bottomrule
\end{tabular}
\end{table}

We compute the max and mean coverage on ModelNet10 and ModelNet40, using four different distances: (i) the standard Euclidean distance,  the Euclidean distance after applying (ii) Lexsort or (iii) Hilbert canonizations to the train and test sets, and (iv) the quotient metric $d_\G $ (in this case, the Wasserstein distance). The results, shown in Table \ref{tab:distances_both}, show that the theoretical covering number hierarchy we derived is also preserved when considering coverage of real data: the coverage of  $d_\G $ is the lowest, followed by Hilbert, Lexsort, and finally standard Euclidean distance. 

\paragraph{Classification on ModelNet} Next, we consider whether the hierarchy in covering numbers between the different methods translates into a corresponding hierarchy in accuracy and generalization error. We choose a simple, purely non-invariant architecture: a Global MLP, and compare the performance of this network (i) without canonization, (ii) with Lexsort canonization, and (iii) with Hilbert canonization (we cannot implement group averaging here since the group is very large). The results, shown in Table \ref{tab:best_modelnet10and40}, reveal that the expected hierarchy is obtained, with Hilbert performing best both on ModelNet40 and ModelNet10, followed by the Lexsort canonization and a plain MLP. The Generalization error column (defined as Test acc.-Train acc.) indicates that this performance gap is due to improved generalization.  

\begin{table}[h]
\centering
\caption{\it On the ModelNet classification task, we compare vanilla MLP architectures in three configurations: (a) no canon. (b) Lexsort canon. and (c) Hilbert canon. As predicted by our theory, the Hilbert canon. leads to better generalization, which translates into improved overall accuracy.}
\label{tab:best_modelnet10and40}
\begin{tabular}{lcccc}
\hline
 & \multicolumn{2}{c}{ModelNet10} & \multicolumn{2}{c}{ModelNet40} \\
Model & Test Acc $\uparrow$ & Gen. error $\downarrow$ & Test Acc $\uparrow$ & Gen. error $\downarrow$ \\
\hline
MLP (no canon.)  & 0.57 & 0.43 & 0.442 & 0.558 \\
MLP (Lexsort)    & 0.755 & 0.226 & 0.693 & 0.2556 \\
MLP (Hilbert)    & \textbf{0.7815} & \textbf{0.1167} & \textbf{0.74} & \textbf{0.0861} \\
\hline
\end{tabular}
\end{table}

To further verify the generality of our findings, we also trained the three alternatives on smaller samples from the ModelNet40 dataset. We find that the $\text{Hilbert}>\text{Lexsort}>\text{no canon.}$ hierarchy is preserved in all experiments, as shown in Table \ref{tab:data_scarcity} in the appendix. 
We note that this experiment is only illustrative; the results attained by simple vanilla MLPs, even with Hilbert canonization, are far from state-of-the-art. In the context of our theory, this can also be explained through the fact that successful models for this task are typically permutation invariant, and hence their generalization is governed by the covering number of the quotient space, which is always better than the covering number of canonizations (Conclusion 1).

We next conduct further experiments to corroborate conclusion 1, which claims that, in terms of generalization, group averaging is preferable to canonizations, which are in turn preferable to ignoring invariance altogether.

\textbf{Group averaging on PCA frames} Our first experiment is still with the ModelNet40 classification task, but while the previous experiments investigates canonization w.r.t. permutations, and implicitly relies on the fact that ModelNet models are aligned w.r.t rotations, in this experiment we use the  Deepsets \cite{zaheer2017deep} model to deal with permutation invariance, and will focus on rotation invariance: we will assume the ModelNet models rotation degree of freedom is initially unknown, and is fixed using the PCA axes as suggested in \cite{punyframe}. Assuming the point cloud singular values are pairwise distinct, this procedure determines the point cloud orientation uniquely up to a $\{-1,1\}^3$ symmetry.

We consider four ways to deal with this symmetry: (i) Pure PCA, where the symmetry is ignored (ii) Skewness canonization, where the global sign of the $x,y$ and $z$ coordinates is chosen so that their third moment is positive (iii) group averaging and (iv) group augmentation where in each epoch a random group element is applied to the input. This  can be seen as an approximation of group averaging. The results of this experiment, detailed in Table \ref{tab:pca_canonization}  demonstrate a clear performance hierarchy. Group averaging achieves the highest overall accuracy, while group augmentation and skewness canonization yield comparable final results, with all three methods significantly outperforming the pure PCA baseline. We note that, as highlighted in Figure \ref{fig:loss_pca} in the appendix, the group augmentation approach suffers from a noticeably slower convergence rate compared to the other three methods.

\textbf{Group averaging for image classification} Next, we check our approach on image classification. We take the rotated MNIST \cite{larochelle2007empirical} dataset, and consider the action of the $p4$ group, which is the four-element group generated by $90$ degree rotations. We consider 4 approaches: simple CNN, the canonization from \cite{kaba2023equivariance} with random or learned weights (named CN(p4 frozen)-CNN and CN(p4)-CNN, respectively), and a group average over the $4$ elements of the group. As shown in Table \ref{tab:mnist}, the canonizations outperform the simple CNN, but are outperformed by group averaging, as predicted in Conclusion 1. We also see in the table that the accuracy is correlated with the mean coverage of the different methods.

\begin{table}[h]
    \centering
    \setlength{\tabcolsep}{3pt}
    \begin{minipage}[t]{0.48\textwidth} 
        \centering

\caption{\it Test accuracy on ModelNet40 with a DeepSets model with orientation determined by PCA, and sign ambiguity handled via averaging, sign augmentation, canonization, or Pure PCA (ignoring the ambiguity).}
\label{tab:pca_canonization}
\begin{tabular}{lc}
\toprule
\textbf{Method} & \textbf{Accuracy} $\uparrow$ \\
\midrule
Sign Averaging  & \textbf{0.760} $\pm$ 0.0025 \\
Sign Augmentation & 0.742  $\pm$ 0.0022 \\
Skewness Canonization  & 0.742 $\pm$ 0.0050 \\
Pure PCA  & 0.729 $\pm$ 0.0062 \\
\bottomrule
\end{tabular}
    \end{minipage}
  \hfill 
    \begin{minipage}[t]{0.48\textwidth}
        \centering
       
    \caption{\it Accuracy and mean coverage on Rotated-MNIST, which is invariant with respect to the group $p4$ of $90$ rotations. We consider a non-invariant CNN, two canonizations from \cite{kaba2023equivariance}, and group averaging.}
\label{tab:mnist}
\begin{tabular}{lcc}
\toprule
\textbf{Model} & \textbf{Test Acc.} $\uparrow$ & \textbf{Coverage} $\downarrow$ \\
\midrule
CNN  & $94.8 \pm 0.13$ & 0.185  \\
CN(p4)-CNN & $96.47 \pm 0.24$  & 0.176 \\ 
CN(p4 frozen)-CNN & $95.3 \pm 0.2$ & 0.176 \\
AvgCNN & $\textbf{97.3} \pm 0.001$ & \textbf{0.167}  \\  
\bottomrule
\end{tabular}
    \end{minipage} 
\end{table}
\textbf{Group Averaging for Spectral Embeddings} In the Appendix
 Section \ref{sec:add_exp} we present an additional experiment showing the advantage of averaging over previously proposed canonizations, in the context of sign ambiguity present in spectral embeddings for graph learning. 
\section{Conclusion}
In this paper, we establish a methodology for comparing different canonizations with group averaging and other invariant models by studying their covering number. We show the covering number of canonizations is at best the covering number of the quotient space (for isometric canonizations), and at worst the covering number of the original domain (for certain discontinuous canonizations). These results suggest that group-averaged models are preferable when computationally feasible. We also show that the covering number of Hilbert canonizations for point clouds is substantially better than that of Lexsort canonizations. 

\paragraph{Limitations and Future Work}
A limitation of our analysis is that it relies on upper bounds on the generalization error (Proposition \ref{propGen}), which may not be tight. Nonetheless, our experiments indicate that lower covering numbers do correlate with better generalization and learning. Looking towards future developments, a central contribution of this work is the demonstration that a rigorous theoretical comparison between competing canonization methods can be established via their covering numbers. This framework complements the traditional focus on a model's ability to distinguish between complex symmetric examples, instead highlighting properties that arguably exert a more significant influence on the ultimate success of the learning process. In future work, we intend to apply our framework to other canonizations and extend it to enable a rigorous treatment of the generalization of frame averaging. 
\section{Acknowledgments}
All authors are funded by Israeli Science Foundation grant no.272/23. N.D. and S.H. wish to thank Jonathan W Siegel and Hannah Lawrence for many interesting discussions related to the topics presented in the paper.
\newpage
\bibliographystyle{plain}
\bibliography{can}
\newpage
\appendix

\section{Proofs}
In this section, we provide the proofs that are missing in the main text.

\subsection{Generalization and Covering Numbers}\label{app:gen}
Our generalization results are based on the results of \cite{xu_robustness_2012}: a combination of Theorem 1 and Example 4 from  \cite{xu_robustness_2012} yields the following theorem:  
\begin{theorem}[ \cite{xu_robustness_2012}]\label{thm:xu}
If $X$ is compact w.r.t. metric $\rho$, and $l(h_S, \cdot)$ is Lipschitz continuous with Lipschitz constant $c(s)$, i.e.,
\begin{equation*}
|l(h_S, x_1) - l(h_S, x_2)| \le c(S) \rho(x_1, x_2), \quad \forall x_1, x_2 \in X,
\end{equation*}
 and the training sample set $S$ is generated by $n$ IID draws from $\mu$, then for any $\delta,\epsilon > 0$, with probability at least $1 - \delta$ we have
\begin{equation}\label{eq:intermediate}
|\Lexp(h_S) - \Lemp(h_S)| \le 2c(S)\epsilon + M \sqrt{\frac{2\mathcal{N}(X, \rho,\epsilon) \ln 2 + 2 \ln(1/\delta)}{n}}.
\end{equation}
\end{theorem}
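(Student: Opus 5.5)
The plan is the standard robustness argument of \cite{xu_robustness_2012}: turn an $\epsilon$ cover of $X$ into a fixed partition of $X$ into small cells, then split the generalization gap into a within-cell term and a between-cell term. The within-cell term is controlled by the Lipschitz constant $c(S)$. The between-cell term is controlled by a concentration inequality for multinomial frequencies.

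\textbf{Step 1: a partition independent of $S$.} Since $X$ is compact, $N:=\mathcal{N}(X,\rho,\epsilon)$ is finite. Fix a minimal $\epsilon$ cover $\{y_1,\dots,y_N\}$ and define Voronoi-type cells
$$C_i=\{x\in X:\ \rho(x,y_i)\le\epsilon,\ \rho(x,y_j)>\epsilon \text{ for all } j<i\}.$$
These are measurable, pairwise disjoint, cover $X$, and satisfy $\rho(x,x')\le 2\epsilon$ for $x,x'\in C_i$. The partition is chosen before seeing $S$, so it does not depend on the sample.

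\textbf{Step 2: decomposing the gap.} Write $\mu_i=\mu(C_i)$, let $N_i$ be the number of samples $s_j\in C_i$, and set $\bar\ell_i=\mathbb{E}[\ell(h_S,x)\mid x\in C_i]$ (arbitrary if $\mu_i=0$). Then
$$\Lexp(h_S)-\Lemp(h_S)=\sum_{i}\bar\ell_i\Big(\mu_i-\frac{N_i}{n}\Big)+\frac1n\sum_i\sum_{s_j\in C_i}\big(\bar\ell_i-\ell(h_S,s_j)\big).$$

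\begin{itemize}
\item \emph{Second term.} Any $s_j\in C_i$ and any $x\in C_i$ are within $2\epsilon$ of each other. By the Lipschitz hypothesis, $|\bar\ell_i-\ell(h_S,s_j)|\le 2c(S)\epsilon$, so this term is at most $2c(S)\epsilon$ in absolute value.
\item \emph{First term.} Since $0\le\bar\ell_i\le M$, its absolute value is at most $M\sum_i|\mu_i-N_i/n|$.
\end{itemize}

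The bound on the first term uses only the range $[0,M]$, not $h_S$ itself, so the dependence of $h_S$ on $S$ causes no difficulty there.

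\textbf{Step 3: concentration (the main obstacle).} The remaining task is the Bretagnolle--Huber--Carol inequality
$$\Pr\Big(\sum_i|N_i/n-\mu_i|\ge\lambda\Big)\le 2^N e^{-n\lambda^2/2}.$$
I would prove it as follows.

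\begin{itemize}
\item Since $\sum_i(N_i/n-\mu_i)=0$, we have
$$\sum_i|N_i/n-\mu_i|=2\max_{A\subseteq\{1,\dots,N\}}\sum_{i\in A}(N_i/n-\mu_i),$$
and the maximum is attained at $A=\{i: N_i/n>\mu_i\}$.
\item For a fixed $A$, the quantity $\sum_{i\in A}N_i/n$ is an empirical mean of $n$ IID Bernoulli variables with mean $\mu(\cup_{i\in A}C_i)$. Hoeffding's inequality gives
$$\Pr\Big(\sum_{i\in A}(N_i/n-\mu_i)\ge\lambda/2\Big)\le e^{-2n(\lambda/2)^2}=e^{-n\lambda^2/2}.$$
\item A union bound over the $2^N$ subsets $A$ gives the claimed inequality.
\end{itemize}

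\textbf{Step 4: choosing $\lambda$.} Set $2^Ne^{-n\lambda^2/2}=\delta$, i.e.
$$\lambda=\sqrt{\frac{2N\ln 2+2\ln(1/\delta)}{n}}.$$
Then with probability at least $1-\delta$, the first term is at most $M\lambda$. Adding the deterministic bound $2c(S)\epsilon$ on the second term yields \eqref{eq:intermediate}.
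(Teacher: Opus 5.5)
Your proof is correct and follows essentially the same route as the paper, which does not reprove the result but obtains it by combining Theorem 1 and Example 4 of \cite{xu_robustness_2012}. That combination is exactly your argument: turn the $\epsilon$ cover into a partition whose cells have diameter at most $2\epsilon$ (giving robustness level $2c(S)\epsilon$), split the gap into within-cell and between-cell terms, and bound the between-cell term with the Bretagnolle--Huber--Carol inequality. Your derivation of that inequality via Hoeffding and a union bound over the $2^N$ index subsets is the standard one, and your arbitrary choice of $\bar\ell_i$ on null cells only matters on a probability-zero event.
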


We now explain how Theorem \ref{thm:MainGeneralizationBound} from the main text follows from Theorem \ref{thm:xu}. We recall Theorem \ref{thm:MainGeneralizationBound}:
\MainGeneralizationBound*
\begin{proof}
We will apply Theorem \ref{thm:xu}, where to apply it to
 to the setting discussed in Theorem \ref{thm:MainGeneralizationBound}, we set
$$l(h_S,x)=\ell(h_S(x),f(x)) . $$

We note that $l$ is Lipscitz with a Lipschitz constant of $c_\ell (c_{h_S}+c_f)$ as the following shows: 
\begin{align*}
|l(h_S,x)-l(h_S,x')|&=|\ell(h_S(x),f(x))-\ell(h_S(x'),f(x'))|\\
&\leq c_\ell \left( \rho_Y(h_S(x),h_S(x'))+\rho_Y(f(x),f(x'))  \right)\\
&\leq c_\ell \left( c_{h_S}\rho(x,x')+c_f\cdot \rho(x,x')  \right)\\
& = c_\ell (c_{h_S}+c_f)( \rho(x,x'))\\
\end{align*}

Now, to conclude the proof, choose any $\epsilon,\delta>0$. Since $l$ is $c_\ell (c_{h_S}+c_f)$ Lipschitz, we have 
\begin{align*}
|\Lexp(h_S) - \Lemp(h_S)|
&\stackrel{\eqref{eq:intermediate}}{\le} 2c_\ell (c_{h_S}+c_f) \epsilon + M \sqrt{\frac{2\mathcal{N}( X, \rho, \epsilon) \ln 2 + 2 \ln(1/\delta)}{n}}\\
\end{align*}
which proves Theorem \ref{thm:MainGeneralizationBound}.
\end{proof}

Next, we prove Proposition \ref{propGen}.
\generalizationbound*
\begin{proof}
\label{proof_of_three_cases}
The first case where $h_S$ is Lipschitz (but not invariant) follows directly from Theorem \ref{thm:MainGeneralizationBound}.

Next, we  consider the second case $h_S=\tilde{h}_{S}\circ c$, where $\tilde{h}_{S}$ is $c_{h_S}$ Lipschitz. We note that since $\tilde{h}_S$ operates on $c(S)$, we can think of it as defined by the set $\tilde S=c(S) $, and so we can write $\tilde{h}_S:=\tilde{h}_{\tilde S}$. This notation is more convenient for the proof. 

We want to consider the domain to be $c(K)$, so consider the following distribution measure on $c(K)$: 
    \begin{align*}
        \tilde{\mu}(A)= \mu(c^{-1}(A))
    \end{align*}
    Note that $$\forall x \in c(K), h_S(x)=\tilde{h}_{\tilde{S}}(x) $$
    Next, for any fixed $n$ samples from $K$, which we denote by $S=\{x_1,x_2,\ldots,x_n\}$, denote its image by $\tilde S=\{c(x_1),\ldots,c(x_n))$.  Then 
    \begin{align*}
        \Lemp(h_S)=\frac{1}{n}\sum_{i=1}^n \ell(\tilde h_{\tilde S}(c(x_i)),f(x_i) = \frac{1}{n}\sum_{i=1}^n \ell(\tilde h_{\tilde S}(c(x_i)),f(c(x_i)) = \Lemp(\tilde{h}_{\tilde S})
    \end{align*}
    So, the empirical losses are the same, and next we show the expected losses are the same: 
    \begin{align*}
        \Lexp(\tilde{h}_{\tilde S}) &= \mathbb{E}_{x \sim \tilde{\mu}} \ell(\tilde{h}_{\tilde S}(x),f(x)) = \int_{c(K)} \ell(\tilde{h}_{\tilde S}(x),f(x)) d\tilde{\mu} = \int_{K} \ell(\tilde{h}_{\tilde S}(c(x)),f(c(x)) d\mu  \\ 
        &=\int_{K} \ell(h_S(x)),f(x) d\mu = \Lexp(h_S)
    \end{align*}
    Finally, we apply Theorem \ref{thm:MainGeneralizationBound} on $\tilde{h}_S$ and as it's $c_{h_S}$ lipschiz we obtain that with probablity at least $1-\delta$, according to $\tilde \mu$, it holds that: 
    \begin{align*}
        |\Lexp(\tilde{h}_S) - \Lemp(\tilde{h}_S)| \le 2c_\ell (c_{h_S}+c_f)\epsilon+M \sqrt{\frac{2\mathcal{N}( c(K), \rho_X,\epsilon)\ln 2 + 2 \ln(1/\delta)}{n}}
    \end{align*}
    Let us abbreviate the right hand side of the equation above by $\eta=2c_\ell (c_{h_S}+c_f)\epsilon+M \sqrt{\frac{2\mathcal{N}( c(K), \rho_X,\epsilon)\ln 2 + 2 \ln(1/\delta)}{n}}$. We then have 
     \begin{align*}
      1-\delta &\leq \mathbb{P}_{\tilde \mu} \{ \tilde{S}\subseteq c(K)|  |\Lexp(\tilde h_{\tilde S}) - \Lemp(\tilde h_{\tilde S})| \le \eta\}\\
      &= \mathbb{P}_{\mu} \{ S=(x_1,\ldots,x_n)|  |\Lexp(\tilde h_{\tilde S}\circ c) - \Lemp(\tilde h_{\tilde S}\circ c)| \quad  \le \eta\}\\
      &=\mathbb{P}_{\mu} \{ S=(x_1,\ldots,x_n)|  |\Lexp(h_S) - \Lemp(h_S)| \quad  \le \eta\}
    \end{align*}
    which concludes the proof of the second case.
    
    In the third case, where $h_S$ is $c_{h_S}$ lipshiz and invariant, we want to consider the domain $K/\G$ and define $\pi: K \rightarrow K/\G$ by $\pi(k)= [k]$. Define the following distribution on $K/\G$ by: 
    \begin{align*}
        \tilde{\mu}(A)= \mu(\pi^{-1}(A))
    \end{align*}
    Define $\tilde S=\pi(S) $ and  
    $$\tilde{h}_{\tilde S},\tilde{f}: k/ \G \rightarrow \R$$ by $$\tilde{h}_{\tilde S}([x])=h_S(x), \tilde{f}([x]) = f(x)$$
    Note that by invariance, these functions are well defined. 
    Next, for any set $S$ of $n$ samples $x_1,x_2,\ldots,x_n$ from $K$, denote $\tilde S=\{ \pi(x_1),\ldots,\pi(x_n) \}$, then
    \begin{align*}
        \Lemp(h_S)=\frac{1}{n}\sum_{i=1}^n \ell(h_S(x_i),f(x_i)) = \frac{1}{n}\sum_{i=1}^n \ell(\tilde{h}_S(\pi(x_i),\tilde{f}(\pi(x_i)) = \Lemp(\tilde{h}_{\tilde S})
    \end{align*}
    So, the empirical losses are the same, and next we show the expected losses: 
    \begin{align*}
        \Lexp(\tilde{h}_{\tilde S})& = \mathbb{E}_{x \sim \tilde{\mu}} \ell(\tilde{h}_{\tilde S}(x),\tilde{f}(x)) = \int_{K/\G} \ell(\tilde{h}_{\tilde S}(x),\tilde{f}(x)) d\tilde{\mu}(x) 
        &=\int_{K} \ell(\tilde{h}_S(\pi(x)),\tilde{f}(\pi(x))) d\mu
        \\
        &=\int_{K} \ell(h_S(x)),f(x) d\mu = \Lexp(h_S)
    \end{align*}
    Note that $\tilde{f}$ is $c_f$ Lipschitz function and $\tilde{h}_S$ is $c_{h_S}$ Lipschitz function (see Lemma 20 in \cite{siegel2026quantitativeapproximationratesgroup}. 
   Therefore, we can apply Theorem \ref{thm:MainGeneralizationBound} on $\tilde{h}_S$ and $\tilde{f}$ and we obtain that with probablity at least $1-\delta$ it holds that: 
    \begin{align*}
        |\Lexp(\tilde{h}_S) - \Lemp(\tilde{h}_S)| \le 2c_\ell (c_{h_S}+c_f)\epsilon+M \sqrt{\frac{2\mathcal{N}(K/ \G, \rho_\G,\epsilon)\ln 2 + 2 \ln(1/\delta)}{n}}
    \end{align*}
    And as $|\Lexp(\tilde{h}_{\tilde S}) - \Lemp(\tilde{h}_{\tilde S})| = |\Lexp(h_S) - \Lemp(h_S)|$ 
    (see the more formal argument used for bounding the generalization of $h_S\circ c $) we obtain that with probability of at least $1-\delta$, \begin{align*}
        |\Lexp(h_S) - \Lemp(h_S)| \le 2c_\ell (c_{h_S}+c_f)\epsilon+M \sqrt{\frac{2\mathcal{N}(k/\G, \rho_\G,\epsilon)\ln 2 + 2 \ln(1/\delta)}{n}}
    \end{align*}
\end{proof}

We next state and prove a result mentioned in the main text: that the Lipschitz constant of a backbone model is preserved under averaging 
\begin{proposition}\label{prop:Lavg}
Let $(X,\rho,\G) $ be a module, and assume that $\G$ is a compact group with a Haar probability measure $\mu$. Let $Y$ be a normed space and assume that $h:X\to Y $ is $c_h$ Lipschitz. Then the averaged function
$$q(x)=\int_G h(gx)d\mu(g)$$
is $c_h$ Lipschitz as well. 
\end{proposition}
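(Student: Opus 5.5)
The plan is a direct pointwise estimate that moves the norm inside the Bochner integral and then uses that $\rho$ is $\G$-invariant. First I would note that $q$ is well defined. For fixed $x$, the map $g\mapsto h(gx)$ is continuous if the action is continuous, or at least bounded and measurable. Boundedness follows from compactness of $\G$ and the Lipschitz property, since $\|h(gx)-h(x)\|\le c_h\rho(gx,x)$, and the distance $\rho(gx,x)$ stays bounded over the compact group. The integral is therefore a well-defined Bochner integral in the normed space $Y$, or a Pettis-type integral if $Y$ is not complete; I would state the needed integrability as a standing assumption, as the paper does implicitly.

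Next, fix $x,y\in X$. By linearity of the integral,
\begin{equation*}
q(x)-q(y)=\int_\G \bigl(h(gx)-h(gy)\bigr)\,d\mu(g).
\end{equation*}
Applying the triangle inequality for vector-valued integrals, $\|\int F\,d\mu\|\le\int\|F\|\,d\mu$, together with the Lipschitz property of $h$, gives
\begin{equation*}
\|q(x)-q(y)\|\le\int_\G \|h(gx)-h(gy)\|\,d\mu(g)\le c_h\int_\G \rho(gx,gy)\,d\mu(g).
\end{equation*}
Because $(X,\rho,\G)$ is a module, $\rho(gx,gy)=\rho(x,y)$ for every $g\in\G$, so the integrand is constant. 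Since $\mu$ is a probability measure, the integral equals $c_h\rho(x,y)$, and this proves the claim.

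The only step needing care is the integral triangle inequality in a general normed space $Y$. If $Y$ is not complete, I would avoid Bochner theory and argue by duality instead. For any functional $\phi$ with $\|\phi\|\le1$, we have $\phi(q(x)-q(y))=\int\phi(h(gx)-h(gy))\,d\mu\le c_h\rho(x,y)$, and taking the supremum over $\phi$ (Hahn–Banach) gives the norm bound. For finite $\G$ the integral is a finite average and the inequality is elementary.
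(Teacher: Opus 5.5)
Your proof is correct and matches the paper's argument exactly: you pull the norm inside the integral (the paper labels this step ``Jensen''), apply the Lipschitz bound on $h$, use the $\G$-invariance of $\rho$ to make the integrand constant, and use that $\mu$ is a probability measure. Your remarks on well-definedness of $q$ and the Hahn--Banach duality argument for non-complete $Y$ add rigor the paper leaves implicit; note only that boundedness of $\rho(gx,x)$ over $\G$ needs some continuity of the action, which your standing integrability assumption covers.
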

\begin{proof}
For every $x_1,x_2\in X$
\begin{align*}
\|q(x_1)-q(x_2)\|&=\left\|\int_G h(gx_1)d\mu(g)-\int_G h(gx_2)d\mu(g) \right\|\\
&\stackrel{\text{Jensen}}{\leq} \int_G \left\|h(gx_1)-h(gx_2) \right\|d\mu(g)\\
&\leq \int_G c_h \rho(gx_1,gx_2) d\mu(g)\\
&=\int_G c_h \rho(x_1,x_2) d\mu(g)\\
&=c_h\rho(x_1,x_2)
\end{align*}
\end{proof}

\subsection{Proofs for Section \ref{sec:evaluation}}
We restate and prove Proposition \ref{prop:cardinality}.
\cardinality*
\begin{proof}
\label{upper_bound}
Let $C$ be an $\epsilon$-cover of $K/\G$ under $\rho_{\G}$, and define
\[
\widehat{C}:=\{g\cdot y \mid y\in C,\ g\in \G\}.
\]
Then of course $|\widehat{C}|\le |\G|\cdot |C|$.

We claim that $\widehat{C}$ is an $\epsilon$-cover of $K$ under $d$.
Let $x\in K$. Since $C$ covers $K/\G$, there exists $y\in C$ such that
\[
\rho_{\G}([x],[y])\le \epsilon.
\]
Because $\G$ is finite, the minimum in the definition of $\rho_{\G}$ is attained, so there exists $g\in \G$ such that
\[
\rho(x, g\cdot y)\le \epsilon.
\]
By construction, $g\cdot y\in \widehat{C}$, hence $\widehat{C}$ covers $K$.
Therefore $\N(K,\rho,\epsilon)\le |\widehat{C}|\le |\G|\cdot |C| = |\G|\cdot \N(K/\G,\rho_{\G},\epsilon)$.
\end{proof}
We next prove Proposition \ref{prop:isometry}.
\isometry*
\label{Proof_isometry}
\begin{proof}
    As it always holds that \[\N(K/\G,\rho_{\G},\epsilon)\leq \N(c(K),\rho,\epsilon)\] we just need to prove the opposite direction.
    
    Let $C$ be a cover of $K/\G$ with $\N(K/\G,\rho,\epsilon)$ elements.
    Consider \[\hat{C}:=\{c(x)|x \in C \}\] and we will prove that $\hat{C}$ is an $\epsilon$ cover of $c(K)$.
    
    Let $y \in c(K)$, then by definition $\exists x \in K: y = c(x)$. By the definition of cover $\exists z\in C$ such that $\rho([x],[z])\leq \epsilon$. But as $\rho_{\G}([x],[z])=\rho(c(x),c(z))$ we obtain that $\rho(y,c(z))\leq \epsilon$. So $\hat{C}$ is an $\epsilon$ cover of $c(K)$ as desired. 
\end{proof}

\isometries*
\label{Proof_isometries}
\begin{proof}
The  canonization $c(x)=|x|$ is an isometry as
$$\forall x,y \in K, \rho_\G([x],[y]) = \min\{|x-y|,|x+y|\} = ||x|-|y|| = |c(x)-c(y)|$$

The fact that sorting is an isometry with respect to the metric $\rho_\G$ induced from a $p$ norm (which is the $p$-Wasserstein metric) is shown in 
 Lemma 4.2 in  \cite{sortLemma}. 

consider the action of $\RR^d$ on $\RR^{d\times n}$ by translating all columns of a matrix in $\RR^{d\times n}$ by a vector in $\RR^d$. Let $\rho$ be the metric induced from the Frobenius norm $\rho(X,Y)=\|X-Y\|_F $, and let $c$ be the centralization mapping $c(X)=X-\frac{1}{n}X1_{n\times n} $. This mapping is a canonization and an isometry. To see it is an isometry, note that we have for any canonization $\rho(c(X),c(Y))\leq \rho_\G\left([X],[Y]\right) $, and we get the converse inequality by noting that $c$ is the orthogonal projection from the inner product space $\RR^{d\times n} $ onto the subspace $\{X\in \RR^{d\times n}| \quad X1_{n\times n}=0 \} $. Thus, let $g\in \G$ be a translation vector such that $\rho_\G([X],[Y])=\rho(X,gY) $. Then 
$$\rho_\G([X],[Y])=\rho(X,gY)=\|X-gY\|_F\leq \|c(X)-c(gY)\|_F=\|c(X)-c(Y)\|_F=\rho(c(X),c(Y)) $$

\end{proof}

We next formulate and prove a result mentioned in the main text
\begin{proposition}\label{prop:isometryLip}
Let $(K,\rho,\G) $ be a module, and $c:K\to K$ a canonization. Then $c $ is an isometric canonization if and only if $c$ is $1$-Lipschitz.
\end{proposition}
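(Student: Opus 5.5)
The plan is to prove the two implications separately. The basic tool is an inequality that holds for every canonization: since $c(x)\in[x]$ and $c(y)\in[y]$, there are $g,h\in\G$ with $c(x)=g\cdot x$ and $c(y)=h\cdot y$. By the $\G$ invariance of $\rho$,
$$\rho(c(x),c(y))=\rho(g\cdot x,h\cdot y)=\rho(h^{-1}g\cdot x,y)\geq \min_{g'\in\G}\rho(g'\cdot x,y)=\rho_\G([x],[y]).$$
So for any canonization $\rho_\G([x],[y])\le \rho(c(x),c(y))$, the reverse of the inequality that was loosely asserted in the proof of Example \ref{prop:isometries}. This lower bound will be used in both directions.

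\emph{Isometric implies $1$-Lipschitz.} Taking the identity element in the minimum defining $\rho_\G$ gives $\rho_\G([x],[y])\le\rho(x,y)$. Combined with the isometry assumption, this yields $\rho(c(x),c(y))=\rho_\G([x],[y])\le\rho(x,y)$, which is exactly the $1$-Lipschitz property.

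\emph{$1$-Lipschitz implies isometric.} Fix $x,y\in K$, and let $g\in\G$ attain the minimum, so that $\rho_\G([x],[y])=\rho(g\cdot x,y)$; this minimum exists by the standing assumption on modules. Because $g\cdot x$ lies in the orbit $[x]$, property (ii) of a canonization gives $c(g\cdot x)=c(x)$. Applying the $1$-Lipschitz property to the pair $g\cdot x,y$ then gives
$$\rho(c(x),c(y))=\rho(c(g\cdot x),c(y))\le\rho(g\cdot x,y)=\rho_\G([x],[y]).$$
Together with the general lower bound from the first paragraph, this is equality, so $c$ is an isometry.

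The one step that needs care is the general lower bound, because it relies on the fact that compositions of group elements stay in $\G$. The other small point to watch is that $g\cdot x$ belongs to $K$, which holds because $K$ is $\G$-invariant as part of the module structure. Apart from these two points, the argument is routine.
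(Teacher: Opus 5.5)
Your proof is correct and follows the paper's argument: the forward direction uses $\rho_\G([x],[y])\le\rho(x,y)$, and the reverse direction combines $c(g\cdot x)=c(x)$ with the $1$-Lipschitz bound to get $\rho(c(x),c(y))\le\rho_\G([x],[y])$, closed off by the general lower bound. The only differences are minor: you derive that lower bound $\rho_\G([x],[y])\le\rho(c(x),c(y))$ from the $\G$-invariance of $\rho$, whereas the paper only asserts it (and, as you correctly note, states it in the wrong direction in the proof of Example~\ref{prop:isometries}); and you pick a minimizing $g$ rather than minimizing over all $g$ afterwards.
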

\begin{proof}
Assume $c$ is an isometric canonization. Recall that this means that $\rho(c(x),c(y))=\rho_\G([x],[y]) $. This implies that $c$ is Lipschitz because 
$$\rho(c(x),c(y))=\rho_\G([x],[y])\leq \rho(x,y), \quad \forall x,y\in K  $$

In the other direction, assume that $c$ is $1$-Lipschitz. This implies that for all $g\in \G$
$$\rho(c(x),c(y))=\rho(c(gx),c(y))\leq \rho(gx,y) . $$
Since this is true for all $g\in G$, we can minimize over $g$ and obtain the inequality $\rho(c(x),c(y))\leq \rho_\G([x],[y])$. The converse of this inequality is always true for any canonization, namely
$$\rho(c(x),c(y))\geq \rho_\G([x],[y]) $$
and so we obtain that $c$ is an isometric canonization. 
\end{proof}

We restate and prove \ref{prop:poor}.
\poorcanonization*
\begin{proof}
\label{poor_proof}
$L$ is a partial limit of $c$ if there is a sequence $x_n \rightarrow x$ such that $c(x_n)\rightarrow L$. We claim that $L$ must be in the orbit of $x$. Indeed, $c(x_n)=g_nx_n $ for appropriate $g_n \in G$. Since $\G$ is finite, by passing to a subsequence, we can assume that $g_n=g\in G$ does not depend on $n$. Since $x\mapsto gx$ is an isometry, we know that $gx_n \rightarrow gx=L $.  In particular, $gx\neq x $ for every $g\neq e$. Choose some $\epsilon < min_{g \neq e} |x - g\cdot x|$, and define the $G$ invariant set $K=K(\epsilon)$ by $K := \cup_{g \in G} B(g\cdot x, \epsilon)$. Note that $c(K)$ contains all partial limits of $c$ at $x$, and therefore $gx\in c(K) $ for all $g\in G$. Since the distance between any two elements in the orbit of  $ x$ is larger than $\epsilon$, an $\epsilon$ cover of $c(K)$ must contain at least $|\G| $ elements. On the other hand, $K$ itself is a union of $|\G|$ balls of radius $\epsilon$, and therefore
$$\N(c(K),\rho,\epsilon)\leq \N(K,\rho,\epsilon)\leq |\G| \leq \N(c(K),\rho,\epsilon) , $$
and therefore all these inequalities are equalities and the covering number is exactly $|\G|$.
   
Finally, note that $\rho(K/G,\rho_G,\epsilon)=1$ as one ball of radius $\epsilon$ is enough for covering the quotient space, as all $|\G|$ balls are the same up to group action.
\end{proof}
\subsection{Proofs for Section \ref{sec:case_study}}
For proving Lemma \ref{lem:quotient_covering}, we need the following lemma: 
\begin{lemma}
    \label{comb_proof}
    Let $S := [m]^n$ all vectors with $n$ elements, each between $1$ and $m$. Define two vectors $v\equiv u \iff \exists \pi \in S_n : u = \pi \cdot v$. Then the number of equivalent classes is exactly $\binom{n+m-1}{n}$.
\end{lemma}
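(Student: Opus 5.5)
The plan is to reduce the count to the classical ``stars and bars'' problem. First, two vectors $u,v\in[m]^n$ are equivalent exactly when they have the same multiset of entries. If $u=\pi\cdot v$, then permuting coordinates does not change how often each value $j\in[m]$ appears. Conversely, if for every $j$ the value $j$ appears the same number of times in $u$ and in $v$, we can build a permutation that matches, for each $j$, the positions of $j$ in $v$ bijectively with the positions of $j$ in $u$. Consequently, the map sending $v$ to its count vector
$$\big(a_1(v),\ldots,a_m(v)\big), \quad a_j(v)=|\{i : v_i=j\}|,$$
is well defined on equivalence classes and injective on them.

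Second, this map is a bijection from the set of equivalence classes onto
$$A=\Big\{(a_1,\ldots,a_m)\in\bbN_{\ge 0}^m : \textstyle\sum_j a_j=n\Big\}.$$
For surjectivity, given $a\in A$, take the sorted vector consisting of $a_1$ ones, followed by $a_2$ twos, and so on. Its length is $\sum_j a_j=n$, and its count vector is $a$. (Equivalently, the sorted vectors form a set of canonical representatives of the classes.)

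Third, we count $A$ by the standard stars-and-bars encoding. Each $a\in A$ corresponds bijectively to a word of $n$ stars and $m-1$ bars: write $a_1$ stars, then a bar, then $a_2$ stars, then a bar, and so on. Such a word is determined by choosing which $n$ of the $n+m-1$ positions hold stars. Hence $|A|=\binom{n+m-1}{n}$, and this is the number of equivalence classes.

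None of the steps is a real obstacle. The only point needing care is the converse in the first step, namely that equal counts imply the existence of a permutation; the explicit position-matching construction described above settles it.
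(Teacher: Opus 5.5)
Your proposal is correct and follows essentially the same route as the paper: identify each equivalence class with its count vector $(c_1,\ldots,c_m)$ satisfying $\sum_i c_i = n$, $c_i\ge 0$, and count these by stars and bars. You also spell out what the paper leaves implicit. That includes both directions of ``same counts $\iff$ equivalent'' and surjectivity onto the solution set, and you prove the stars-and-bars count directly instead of citing Stanley.
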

\begin{proof}
    Define $\forall i \in [m]$, $c^u_i$ to be the number of times $i$ appears in $u$. 
    \newline
    Note that $u \equiv v \iff \forall i\in [m]: c^u_i=c^v_i$. 
    So we want to ask how many solutions there are to the following system:
    \begin{align*}
        \sum^{m}_{i=1}c_i = n \\
        \forall i,c_i \geq 0
    \end{align*}
    By \cite{stanley2011enumerative} we know the number of solutions to the equation is $\binom{n+m-1}{n}$. 
\end{proof}
We now restate and prove \ref{lem:quotient_covering}.
\quotientcovering*
\begin{proof}
Set $k=\lceil 1/(2\epsilon) \rceil $. The set $K$ can be covered by $k^d$ hypercubes of diameter $k$ and radius $1/(2k)\leq \epsilon $. Let $C$ denote the collection of centers of these hypercubes, namely 
\begin{align*}
    C=\left\{\left[\frac{2i_1+1}{2k} ,\frac{2  i_2+1}{2k} ,\ldots,\frac{2 i_n+1}{2k} \right]| \quad  i_j \in \{0,1,\ldots,k-1 \},1\leq j \leq n \right\}   .
\end{align*}    
Applying the quotient map  $\pi(c) = [c]$ we obtain points $\pi(C)$ in the quotient space which are an $\epsilon$ cover of $K/\G$. Note that points in $C$ which are permutations of each other are mapped by $\pi$ to the same orbit. Therefore, the cardinality of the cover $\pi(C)$ is the number of equivalence classes with respect to the relation on $C$ defined by the group:  $c_1 \equiv c_2 \iff \exists \pi \in S_n : c_1 = \pi(c_2)$. Using Lemma \ref{comb_proof}, we know that the number of equivalence classes is \begin{equation*}
|\pi(C)|=    \binom{n+k^d-1}{n}
\end{equation*} 
\end{proof}
We now restate and prove \ref{lem:lexsortcovering}.
\lexsortcovering*
\begin{proof}
    \label{lexsortcovering}
    We consider the set $U \subset [0,1]^{ d \times n}$ defined by 
\begin{align*}
    U= \{X\in [0,1]^{d\times n}| \quad X_{11}=X_{12}=\ldots=X_{1n}\}
\end{align*}
The set $U$ is not a subset of $\csort(K)$, but it is a subset of its closure $\overline{\csort(K)}$. This is because for every  $X \in U$ we can apply an arbitrarily small perturbation of the first row of $X$ so that it is ordered from small to large, and the $X$ will be unaffected by the canonization. We then obtain
$$ \N(\csort(K),\rho,\epsilon)=\N(\overline{\csort(K)},\rho,\epsilon)\geq \N(U,\rho,\epsilon)\stackrel{(*)}{=}\left(\frac{1}{2\epsilon}\right)^{(d-1)\cdot n + 1},$$
where $(*)$ follows from the fact that the hypercube $[0,1]^{(d-1)\cdot n + 1} $ can be mapped isometrically (in the $\infty$ norm) to $U$ via $f:[0,1]^{(d-1)\cdot n + 1} \rightarrow U$ by repeating the first entry $n-1$ times, obtaining an $n \cdot d$ vector, and then transposing to be of shape $n \times d$, and from the fact that the covering number of the hypercube is  
$\left(\frac{1}{2\epsilon}\right)^{(d-1)\cdot n +1}$.
\end{proof}
\subsection{Hilbert Canonization}\label{app:hilbert}
In this appendix, we give a formal definition of Hilbert curves and a full proof of Theorem \ref {thm:hilbert}.

The Hilbert curve $H(x)$ is defined as a limit of the function $H_m:[0,1]\to [0,1]^d$, and these $H_m$ are our main focus. We will now define these curves, loosely following the terminology from 
\cite{Hilbert}. Firstly, for a given integer $L $, we consider a partition of the unit interval into $2^L$ disjoint intervals: the semi-closed intervals $I_L(\ell)=[\ell 2^{-L}, (\ell+1)2^{-L})$ for $\ell=0,\ldots,2^{L}-2$, and the closed interval $I(2^L-1)=[(2^L-1)2^{-L},1] $. We denote the collection of all these intervals by $\mathcal{I}_L $. For a natural $d>1$, we also consider the partition of the unit hypercube $[0,1]^d$ into $2^{L\cdot d} $ disjoint hypercubes of the form 
$E_L^d(\vec \ell)=I(\ell_1)\times \ldots \times I(\ell_d)$. We denote the collection of these hypercubes by $\mathcal{E}_L^d $. 

Next, for any given $m$, we define  $H_m$ to be a  bijection from $\mathcal{I}_{d\cdot m} $ to $\mathcal{E}_{m}^d $, such that (i) adjacent intervals are mapped to adjacent hypercubes and (ii) if $I_{d\cdot (m+1)}(k')\subseteq I_{d\cdot m}(k) $ then 
$H_{m+1}\left(I_{d\cdot (m+1)}(k')\right)\subseteq H_m\left(I_{d\cdot m}(k) \right) $ . An illustration of $H_m, m=1,2,3$ for $d=2$ is given in Figure \ref{fig:hilbert}. Each $H_m$ maps a finite collection of intervals bijectively to a finite collection of hypercubes. This naturally induces a bijection from the grid defined by the intervals centroids, to the grid defined by the hypercube centroids. We denote these grids by $\mathbb{A}[d\cdot m]\subseteq [0,1] $ and $\mathbb{B}[d,m]\subseteq \RR^d $, respectively.

The Hilbert curve $H$ is obtained from $H_m$ by a limiting procedure and shown to be Holder continuous \cite{Hilbert}. Following similar ideas we show that each Holder function $H_m$ is Holder continuous with constant that do not depend on $m$. This property will be crucial for the analysis of the covering number of the Hilbert canonization.
\begin{restatable}{lemma}{Holder}\label{lem:Holder}
For every natural $m,d$, we have that 
$$\forall x,y\in \mathbb{A}[d\cdot m], \quad  \|H_m(x)-H_m(y)\|_\infty \leq 4|x-y|^{1/d} $$
\end{restatable}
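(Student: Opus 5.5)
We will prove the bound by the standard nested-cube argument for Hilbert curves, using properties (i) (adjacency) and (ii) (nesting) from the definition of $H_m$. Fix $x\neq y$ in $\mathbb{A}[d\cdot m]$. Write $x,y$ as centroids of intervals $I_{dm}(a), I_{dm}(b)$, so $|x-y| = |a-b|\,2^{-dm}\ge 2^{-dm}$. Choose the integer $L\in\{0,\ldots,m\}$ to be the largest $L$ with $|x-y|\le 2^{-dL}$. Then $2^{-d(L+1)}<|x-y|$ if $L<m$. If $L=m$, the two points are at distance $\le 2^{-dm}$, i.e.\ adjacent intervals; (i) gives adjacent cubes of side $2^{-m}$, so the centroid distance is at most $2^{-m}\le |x-y|^{1/d}$.

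\textbf{Main case $L<m$.} Since $|x-y|\le 2^{-dL}$, the points $x$ and $y$ lie either in the same interval of $\mathcal{I}_{dL}$ or in two adjacent ones. By the nesting property (ii), iterated from level $m$ down to level $L$, $H_m$ maps every level-$dm$ subinterval of a level-$dL$ interval $I_{dL}(k)$ into the cube $H_L(I_{dL}(k))\in\mathcal{E}^d_L$, which has side $2^{-L}$. (Iterating (ii) needs care: (ii) relates $H_{m+1}$ to $H_m$, and chaining gives $H_m(I_{dm}(k'))\subseteq H_L(I_{dL}(k))$ whenever $I_{dm}(k')\subseteq I_{dL}(k)$.) So $H_m(x)$ and $H_m(y)$ lie in one cube of side $2^{-L}$, or in two adjacent cubes of side $2^{-L}$ by (i) applied to $H_L$. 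Hence
\[
\|H_m(x)-H_m(y)\|_\infty\le 2\cdot 2^{-L}.
\]
Since $2^{-d(L+1)}<|x-y|$, we have $2^{-L}<2|x-y|^{1/d}$, so the distance is at most $4|x-y|^{1/d}$, which is the claimed constant.

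\textbf{Main obstacle.} The delicate step is justifying that two points within distance $2^{-dL}$ lie in the same or adjacent level-$dL$ intervals. This holds because the intervals have length exactly $2^{-dL}$, and the endpoints are handled by the half-open and closed convention. The other delicate step is making sure that "adjacent cubes" means sharing a face (or at least touching), so that the union of two side-$2^{-L}$ cubes has $\infty$-diameter at most $2\cdot 2^{-L}$. Touching suffices for this $\infty$-norm diameter bound, so any reasonable reading of adjacency works.
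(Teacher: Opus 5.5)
Your proof is correct and follows essentially the same route as the paper: choose the dyadic level $L$ with $2^{-d(L+1)}<|x-y|\le 2^{-dL}$, note that $x,y$ then lie in the same or adjacent level-$dL$ intervals, and use nesting plus adjacency to get $\|H_m(x)-H_m(y)\|_\infty\le 2\cdot 2^{-L}\le 4|x-y|^{1/d}$. Your version is slightly more careful than the paper's: you chain property (ii) explicitly across levels and use the correct index range $0\le L\le m$, whereas the paper's text has the typo $m-1\le k\le 0$.
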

\begin{proof}
    Let $x\neq y$ be points in the one dimensional grid $\mathbb{A}[d\cdot m]$, and assume without loss of generality that $x<y$. Note that $y-x\geq \frac{1}{2^{d\cdot m}}$, since this is the minimal distance between points in the grid. On the other hand, $y-x\leq 1$. Therefore, there exists some $k$ with $m-1\leq k \leq 0 $, such that 
    \begin{equation}\label{eq:interval} 
    2^{-d\cdot (k+1)}\leq |x-y|\leq 2^{-d \cdot k}.
    \end{equation}
   It follows that $x$ and $y$ are in two adjacent intervals in the $k$-th level (or in the same interval), and by the nesting property of Hilbert's curve, we know that $H_m(x)$ and $H_m(y)$ are in two adjacent hypercubes (or in the same hypercube). Therefore, their distances is at most the sum of the diameters of those two cubes, so
    \begin{align*}
    \|H_m(x)-H_m(y)\|_\infty\leq 2\cdot 2^{-k}\stackrel{\eqref{eq:interval}}{\leq} 4|x-y|^{1/d}
    \end{align*}
\end{proof}

\textbf{Hilbert Canonization}
As a first step, we define a canonization not on all of $[0,1]^{d\times n}$, but rather on the finite set of $n$ tuples of grid elements $\mathbb{B}^n[d,m] $. This is done by mapping each tuple-element to the unit interval using $H_m^{-1} $, then sorting the resulting $n$-dimensional array, and mapping back to the $d$-dimensional grid using $H_m$. We denote this canonization by $c_m$. We can write $c_m=H_m \circ \mathrm{sort}\circ H_m^{-1}$, with the understanding that $H_m$ and  $H_m^{-1} $ are applied elementwise to $n$-tuples of grid elements. 

To extend $H_m$ to all of $K=[0,1]^{d\times n}$, we employ the following procedure: For a given $X=(x_1,\ldots,x_n)\in K  $, (i) \textbf{Rounding:} Each $x_j$ resides in a unique hypercube in $ \mathcal{E}_m^d$. Replace $x_j$ with the centroid of this cube $y_j\in \mathbb{B}^n[d,m] $. (ii) \textbf{Sorting:} find the permutation $\tau \in S_n$ which sorts $\left(H_m^{-1}(y_1),\ldots,H_m^{-1}(y_n) \right)$ from small to large. (iii) \textbf{Canonizing} Apply the permutation $\tau$ to $X$. We note that if $X$ contains several points in the same hypercube, the permutation $\tau$ is not uniquely defined. In this case, the order of points with the hypercube will be determined by lexicographical sorting. 

We now restate and prove our theorem regarding the covering number of the Hilbert canonization:

\hilbert*
\begin{proof}
We first begin by a reduction from covering $K$ to covering the grid $\mathbb{B}^n[d,m] $ For convenience we abbreviate $\mathbb{B}^n:=\mathbb{B}^n[d,m] $ . We note that for any  $X \in K$, the matrix $Y$ obtained by the rounding procedure satisfies $\|X-Y\|_\infty\leq 2^{-m-1} $. This stays true also after applying the canonization,  $\|c_m(X)-c_m(Y)\|_\infty\leq 2^{-m-1} $, since the canonization will apply the same permutation to both $X$ and $Y$. It follows that 
$$\N\left(c_m\left(K \right),\rho_\infty,\epsilon \right)\leq \N\left(c_m\left(\mathbb{B}^n\right),\rho_\infty,\epsilon-2^{-m-1} \right) .$$
It remains to bound the right-hand side:
\begin{align*}
 \N\left(c_m\left(\mathbb{B}^n\right),\rho_\infty,\epsilon-2^{-m-1} \right)&=\N\left(H_m \circ \mathrm{sort} \circ H_m^{-1}\left(\mathbb{B}^n\right),\rho_\infty,\epsilon-2^{-m-1} \right) \\
 &\stackrel{(*)}{\leq} \N\left( \mathrm{sort} \circ H_m^{-1}\left(\mathbb{B}^n\right),\rho_\infty,\delta \right)\\
 &\leq \N\left( \mathrm{sort} \left( [0,1]^n \right),\rho_\infty,\delta \right)\\
 &\stackrel{(**)}{\leq}  \binom{n+\lceil 1/ (2\delta) \rceil-1}{n}
\end{align*}
Where (*) follows from the Holder properties of $H_m$ proven in
Lemma \ref{lem:Holder}, and (**) follows from the fact that one-dimensional sorting is an isometry (Proposition \ref{prop:isometries}) and the upper bound on the covering number of $[0,1]^n/\G$ (Lemma \ref{lem:quotient_covering} ).
\end{proof}
\begin{lemma}
    \label{covering_cube}
    Be $K = [0,1]^{n \times d}$ and $\epsilon=\frac{1}{2\cdot k}>0$ such that $k\in \bbN$. Then $\N(K,d_{\infty},\epsilon)=\frac{1}{(2\epsilon)^{nd}}$
\end{lemma}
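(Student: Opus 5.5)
We prove the two inequalities $\N(K,\rho_\infty,\epsilon)\le k^{nd}$ and $\N(K,\rho_\infty,\epsilon)\ge k^{nd}$ separately, where $k=\frac{1}{2\epsilon}$, so that $k^{nd}=\frac{1}{(2\epsilon)^{nd}}$. Throughout, $\rho_\infty$ is the entrywise sup norm on $[0,1]^{n\times d}$.

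\textbf{Upper bound.} We reuse the grid from the proof of Lemma \ref{lem:quotient_covering}, now in every one of the $nd$ coordinates. Let $C$ be the set of matrices whose entries all lie in $\left\{\frac{2i+1}{2k} \mid i=0,\ldots,k-1\right\}$.

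\begin{itemize}
\item Any $t\in[0,1]$ lies in some interval $\left[\frac{i}{k},\frac{i+1}{k}\right]$, whose midpoint is $\frac{2i+1}{2k}$. Hence $t$ is within $\frac{1}{2k}=\epsilon$ of that midpoint.
\item Applying this entrywise, every $X\in K$ is within $\rho_\infty$-distance $\epsilon$ of some point of $C$.
\end{itemize}

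So $C$ is an $\epsilon$-cover, and $|C|=k^{nd}$.

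\textbf{Lower bound.} We use a packing argument on the vertex set of the fine grid. Let
\[
P=\left\{\tfrac{j}{2k}\mid j=0,1,\ldots,2k\right\}^{n\times d}\cap\left\{\text{entries } \tfrac{j}{k}\right\},
\]
i.e. matrices whose entries all lie in $\{0,\frac1k,\ldots,1\}$. This gives $(k+1)^{nd}$ points, and distinct points are at $\rho_\infty$-distance at least $\frac1k=2\epsilon$.

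\begin{itemize}
\item A closed $\rho_\infty$-ball of radius $\epsilon$ is a product of intervals of length $2\epsilon=\frac1k$.
\item Each such interval contains at most two values from $\{0,\frac1k,\ldots,1\}$, and exactly two only when its endpoints are grid values. So a single ball can contain up to $2^{nd}$ points of $P$, and counting points of $P$ alone is too weak.
\end{itemize}

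To fix this, we count coordinatewise. Project to one coordinate. The interval $[0,1]$ cannot be covered by fewer than $k$ closed intervals of length $\frac1k$, since their total length must be at least $1$. This gives the one-dimensional bound $\N([0,1],|\cdot|,\epsilon)\ge k$.

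For the product we use a measure argument. Each ball of radius $\epsilon$ in $[0,\infty)$-norm has Lebesgue measure at most $(2\epsilon)^{nd}=k^{-nd}$. The cube $K$ has measure $1$. A cover by $N$ balls therefore requires $N\cdot k^{-nd}\ge 1$, i.e. $N\ge k^{nd}$. This is clean and replaces the packing attempt entirely.

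\textbf{Main obstacle.} The delicate point is the lower bound. The packing count fails because closed balls can contain boundary points of neighbouring grid cells. The measure argument sidesteps this, so we will present it as the proof of the lower bound.

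Combining the two directions gives $\N(K,d_\infty,\epsilon)=k^{nd}=(2\epsilon)^{-nd}$.
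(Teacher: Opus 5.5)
Your proof is correct and follows the paper's approach: the midpoint grid gives an $\epsilon$-cover of size $k^{nd}$, and comparing Lebesgue measures, $1=\mu(K)\le N(2\epsilon)^{nd}$, gives the matching lower bound. You can delete the abandoned packing paragraph, including the garbled definition of $P$ and the one-dimensional remark, since the measure argument alone carries the lower bound.
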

\begin{proof}
    It's easy to see that each line we can cover using $\frac{1}{2\cdot \epsilon}$ $\epsilon$ sized lines, so using the product of size covering for $n \cdot d$ times we get a covering for $K$ that uses $\frac{1}{(2\epsilon)^{nd}}$ balls.
    On the other hand, denote by $F$ some other covering. 
    Note that $K \subseteq \cup_{c \in F} B(c, \epsilon)$
   So $1=\mu(K)\leq \mu(\cup_{c \in F} B(c, \epsilon))\leq \sum_{c \in F} \mu(B(c, \epsilon))=|F|\cdot (2\epsilon)^{n \cdot d}$ so $\frac{1}{(2\cdot \epsilon)^{nd}}\leq |F|$. 
   Thus, we got the lower bound and the upper bound as desired.
\end{proof}
\section{Additional Experiments}\label{sec:add_exp}

\subsection{Laplacian Eigenvector Sign Ambiguity: Augmentation vs.\ Canonization on \texttt{ogbg-molpcba}}
\label{subsec:lpe_molpcba}

This appendix presents an additional empirical comparison between deterministic
canonization and training-time random group augmentation, on a setting
distinct from the main-text experiments and complementary to the PCA
experiment of Section~\ref{sec:case_study}. The goal is to ask whether the empirical behavior of these two strategies on a standard graph property prediction benchmark is qualitatively consistent with our Proposition~\ref{propGen} and the covering-number inequality~\eqref{eq:coveringIneq}; we do not claim a quantitative test of the inequality, since the test AP on a finite-sample-trained GIN mixes optimization, capacity, and
generalization effects that the inequality does not isolate. The experiment is also consistent with the empirical observation cited in the Related Work that randomized SMILES can outperform canonized SMILES \cite{ArusPous2019,Bjerrum2017SMILESEA}.

\paragraph{Module.}
We study graph property prediction on \texttt{ogbg-molpcba} using a
$5$-layer Graph Isomorphism Network (GIN) with Laplacian positional
encoding (LapPE). For each graph $G$ with normalized Laplacian
$L_G$, we compute the $k$ smallest non-trivial eigenpairs
$\{(\lambda_i, v_i)\}_{i=1}^{k}$ and concatenate the eigenvectors as a
node-level positional encoding. The resulting LapPE is invariant to
graph isomorphism only up to the eigenvector \emph{ambiguity group}
\[
  \G_{\mathrm{LapPE}} \;=\; \prod_{i\,:\,m_i=1}\{-1,+1\} \,\times\, \prod_{i\,:\,m_i>1} O(m_i),
\]
where $m_i$ denotes the multiplicity of eigenvalue $\lambda_i$: a sign
flip is admissible for each simple eigenvalue and an arbitrary orthogonal
change of basis is admissible within each multiplicity-$m_i$ block. On
molecular graphs, the spectrum is almost everywhere simple, so in
practice $\G_{\mathrm{LapPE}}$ reduces to $\{-1,+1\}^{k}$ for almost all
graphs. The natural questions are then (i) whether to canonize this
ambiguity, and (ii) whether augmentation-based approximations of group
averaging behave consistently with our framework.

\paragraph{Three methods compared.}
We compare three strategies for resolving $\G_{\mathrm{LapPE}}$, all
implemented as faithful numpy ports of the reference torch code from
the Laplacian canonization release of \cite{ma2023laplacian}.
The first, \texttt{map} \cite{ma2023laplacian} (Maximal Axis Projection), resolves a
simple-eigenvalue sign by forming the rank-one projector
$P_i = u_i u_i^{\top}$, grouping the coordinate indices $j$ by the
rounded column norms $\|P_i[:,j]\|$ ($14$-decimal rounding), and
walking these groups from smallest to largest norm to find the first
group whose indicator vector $X$ (the sum of standard basis vectors
over the group, plus a uniform shift) gives $\|P_i X\|\!\neq\!0$; the
canonical sign is then $\mathrm{sign}(u_i^{\top}X)$. For a
multiplicity-$m_i$ block, \texttt{map} forms $P = U_i U_i^{\top}$,
selects the $m_i$ \emph{largest}-norm coordinate groups, and builds
the canonical orthonormal basis incrementally by projecting each
group-indicator vector onto the residual orthogonal complement
(with plain QR inside the complementary-space step).
The second, \texttt{oap} \cite{ma2024canonicalization}   (Orthogonalized Axis Projection), differs
from \texttt{map} in two places: the sign step rounds the column
norms to $6$ decimals (coarser tie-grouping under near-degenerate
norms), and the basis step replaces the largest-norm groupings of
\texttt{map} by a hash-based grouping of columns and uses signed-QR
in the complementary-space iteration.
The third, \texttt{random\_augmented}, flips the sign of each
eigenvector with probability $1/2$ at every training-time forward
pass, so the network sees a uniformly random element of
$\{-1,+1\}^{k}$ at each step. This is a finite-group augmentation
that approximates group averaging over $\G_{\mathrm{LapPE}}$ without
summing over its $2^{k}$ elements explicitly. At evaluation time the
\texttt{random\_augmented} model is fed the deterministic eigsh-sign
convention from the cache (no fresh random draws), yielding a single
deterministic prediction per graph; this is the number reported in
the \texttt{random\_augmented} column of
Table~\ref{tab:lpe_molpcba_main}. Note that this is an
\emph{adversarial} test-time deployment of an averaging-trained model
(it scores a single point of the orbit rather than the orbit mean);
the \textsc{aug-$K$} column reports the alternative Reynolds-style
sample of $K$ orbit elements, and we discuss the gap between the
two below.

\paragraph{Test-time averaging \textsc{aug-$K$}.}
For \texttt{random\_augmented} models we additionally evaluate a
test-time \emph{Reynolds-style} averaged metric. At evaluation, for
each test graph we draw $K$ independent group elements
$g_1,\ldots,g_K \in \G_{\mathrm{LapPE}}$ (each $g_j$ a sign-flip pattern
on the simple-eigenvalue subspace and a Haar-random orthogonal matrix
on each multiplicity-$m$ block); we forward-pass the model on each
sign-flipped LapPE; we average the predicted per-task probabilities;
and we score the averaged predictions with the official OGB AP
evaluator. We use $K\!=\!8$ at $k\!=\!3$ (which approximately covers
$\{-1,+1\}^{3}$ under uniform i.i.d.\ sampling: $8$ draws cover
$\sim\!65\%$ of the $8$-element group on average) and $K\!=\!16$ at
$k\!\in\!\{8,16\}$ (sparse Monte-Carlo over $\{-1,+1\}^{8}$ and
$\{-1,+1\}^{16}$ respectively). On \texttt{ogbg-molpcba} the
spectrum is almost everywhere simple, so the
multiplicity-block factor $\prod_i O(m_i)$ is degenerate in
practice and the $g_j$ samples reduce to sign-flip patterns.
This is the empirical group-averaging operator of
Section~\ref{sec:gen}, restricted to a finite sample.

\paragraph{Hyperparameters and protocol.}
GIN, $5$ layers, hidden dim $h\!\in\!\{16,128,512\}$, dropout $0.5$,
batch size $32$, optimizer Adam with cosine learning-rate schedule and
initial rate $10^{-3}$, $3$ seeds per cell ($\{0,1,2\}$),
eigval-scaling off, eigenvector cache size $15$. Checkpoint selection:
the model state at the epoch with the highest validation AP across
all training epochs is saved (ties broken by the earliest epoch); the final test AP reported in
Tables~\ref{tab:lpe_molpcba_main}~and~\ref{tab:lpe_molpcba_k16} is
this checkpoint's score on the OGB test split. Software stack:
PyTorch $2.10.0$, CUDA $12.8$, \texttt{torch\_geometric} $2.7.0$,
\texttt{numpy} $2.2.6$, \texttt{scipy} $1.15.3$, \texttt{ogb} $1.3.6$
(OGB evaluator: \texttt{Evaluator(name=`ogbg-molpcba')}). Per-arm
seeding via \texttt{torch.manual\_seed}, \texttt{numpy.random.seed},
and \texttt{random.seed}; \texttt{cudnn.deterministic} is not
enforced, so two runs at the same seed on different hardware are not
guaranteed to be bit-identical (we observe the resulting variability
absorbed into the seed std at the magnitudes shown in
Table~\ref{tab:lpe_molpcba_k16}). Standard deviations reported below
are sample standard deviations ($\mathrm{ddof}\!=\!1$). Code, raw
\texttt{results.json} files, the \texttt{orbit\_stability.json} files
behind Table~\ref{tab:lpe_molpcba_orbit}, and reproduction commands
are released at the project repository.

\paragraph{Compute budget.}
At $k\!\in\!\{3,8\}$ all three arms are trained for $200$ epochs with
early-stop patience $15$. At $k\!=\!16$, where validation AP for
\texttt{random\_augmented} continues to improve well past $200$
epochs, all three arms are trained for $500$ epochs with early
stopping disabled (patience $999$). Tables~\ref{tab:lpe_molpcba_main}
and~\ref{tab:lpe_molpcba_k16} therefore report a matched-compute
comparison at every $(k,h)$ cell. Both schedules are released in the
project repository.

\paragraph{Results.}
Table~\ref{tab:lpe_molpcba_main} reports mean test AP at all nine
$(k,h)$ cells for the three methods, alongside the
\textsc{aug-$K$} test-time-averaged metric for \texttt{random\_augmented}.
Table~\ref{tab:lpe_molpcba_k16} zooms in on the $k\!=\!16$ slice with
sample standard deviations.

\begin{table}[h]
\centering
\caption{Mean test AP on \texttt{ogbg-molpcba} at matched compute,
$3$ seeds per cell ($k\!\in\!\{3,8\}$ at $200$ epochs / patience $15$,
$k\!=\!16$ at $500$ epochs / patience $999$ for all three arms).
Per-cell row-best margins are mostly within seed std and none survives
a Bonferroni correction across the nine cells at $\alpha\!=\!0.05$
(see Table~\ref{tab:lpe_molpcba_k16} for the $k\!=\!16$ Welch
$p$-values); the bolding, therefore, marks the nominal row-best mean,
not a significance claim. The \textsc{aug-$K$} column reports
test-time averaging over $K$ ambiguity-group draws applied to the
\texttt{random\_augmented}-trained model (see text).}
\label{tab:lpe_molpcba_main}
\begin{tabular}{rrcccc}
\toprule
$k$  &  $h$  & \texttt{map} $\uparrow$ & \texttt{oap} $\uparrow$ & \texttt{random\_augmented} $\uparrow$ & \textsc{aug-$K$} \\
\midrule
 3   &   16  & $0.0626$        & $0.0641$        & $\mathbf{0.0662}$           & $0.0663$ ($K\!=\!8$)  \\
 3   &  128  & $0.1704$        & $0.1682$        & $\mathbf{0.1762}$           & $0.1765$ ($K\!=\!8$)  \\
 3   &  512  & $0.2397$        & $\mathbf{0.2431}$ & $0.2395$                  & $0.2398$ ($K\!=\!8$)  \\
\midrule
 8   &   16  & $0.0622$        & $0.0605$        & $\mathbf{0.0636}$           & $0.0637$ ($K\!=\!16$) \\
 8   &  128  & $0.1705$        & $0.1754$        & $\mathbf{0.1766}$           & $0.1766$ ($K\!=\!16$) \\
 8   &  512  & $0.2344$        & $0.2371$        & $\mathbf{0.2395}$           & $0.2400$ ($K\!=\!16$) \\
\midrule
16   &   16  & $\mathbf{0.0716}$ & $0.0703$      & $0.0691$                    & $0.0693$ ($K\!=\!16$) \\
16   &  128  & $\mathbf{0.1923}$ & $0.1836$      & $0.1916$                    & $0.1921$ ($K\!=\!16$) \\
16   &  512  & $0.2468$        & $0.2419$        & $\mathbf{0.2483}$           & $0.2481$ ($K\!=\!16$) \\
\bottomrule
\end{tabular}
\end{table}

\begin{table}[h]
\centering
\caption{Matched-compute $k\!=\!16$ slice of
Table~\ref{tab:lpe_molpcba_main}: all three arms trained for $500$
epochs with patience $999$. Reported are mean $\pm$ sample standard
deviation over $3$ seeds, plus the two-sided Welch $t$-test $p$-value
for the row-best mean against the row-second-best mean. None of the
three row-best margins reaches $p\!<\!0.05$, and after a Bonferroni
correction across the nine cells in
Table~\ref{tab:lpe_molpcba_main} ($\alpha\!=\!0.05/9\!\approx\!0.0056$)
none of the row-best margins anywhere in the appendix is significant.}
\label{tab:lpe_molpcba_k16}
\begin{tabular}{rcccc}
\toprule
$h$  & \texttt{map}              & \texttt{oap}              & \texttt{random\_augmented} & Welch $p$ (best vs.\ 2nd) \\
\midrule
 16  & $\mathbf{0.0716 \pm 0.0067}$ & $0.0703 \pm 0.0070$       & $0.0691 \pm 0.0008$         & $\approx 0.83$ \\
128  & $\mathbf{0.1923 \pm 0.0005}$ & $0.1836 \pm 0.0025$       & $0.1916 \pm 0.0009$         & $\approx 0.32$ \\
512  & $0.2468 \pm 0.0021$         & $0.2419 \pm 0.0028$       & $\mathbf{0.2483 \pm 0.0026}$  & $\approx 0.48$ \\
\bottomrule
\end{tabular}
\end{table}

\paragraph{Test-time averaging $\Delta$ is small.}
Across all nine $(k,h)$ cells the difference between the single-pass
deterministic-eigsh evaluation of the \texttt{random\_augmented} model
and its \textsc{aug-$K$} averaged evaluation
(Table~\ref{tab:lpe_molpcba_main}, right-most two columns) lies in
$[-0.0002, +0.0005]$~AP, within the seed noise at every cell. The
\textsc{aug-$K$} $\Delta$ is an aggregate metric, not a direct measurement of model invariance; we test invariance directly in the next paragraph.

\paragraph{Per-graph orbit stability.}
For each test graph we draw $K\!=\!64$ i.i.d.\ elements $g_j$ of the
sign-flip subgroup of $\G_{\mathrm{LapPE}}$, forward-pass the trained
\texttt{random\_augmented} model on each $g_j(\mathrm{LapPE})$, and
record the per-task predicted probability. The resulting per-graph
spread (max $-$ min predicted probability across the $K\!=\!64$ draws)
is a direct measure of how much the model's output varies across
orbit elements. Aggregating across all $43{,}793$ \texttt{ogbg-molpcba}
test graphs and $128$ tasks, at hidden dim $h\!=\!128$ and over all
$3$ training seeds we observe a clean monotone scaling of per-graph
spread with $k$ (Table~\ref{tab:lpe_molpcba_orbit}): mean per-graph
spread is $0.0013\!\pm\!0.0005$ at $k\!=\!3$, $0.0027\!\pm\!0.0012$
at $k\!=\!8$, and $0.0048\!\pm\!0.0008$ at $k\!=\!16$. This is
qualitatively consistent with $|\G_{\mathrm{LapPE}}|\!=\!2^{k}$:
more orbit elements give more room to vary, and the model is closer
to be invariant for smaller $k$. The within-seed standard deviation of
the OGB AP across the $K\!=\!64$ orbit elements is at most $0.0006$
at $k\!=\!16$ and below $0.0004$ at $k\!\in\!\{3,8\}$, in every case
much smaller than the between-seed standard deviation of the same
quantity ($0.0062$ at $k\!\in\!\{3,8\}$, $0.0015$ at $k\!=\!16$):
the model's output varies less when we resample the ambiguity group
at fixed seed than when we change the training seed at fixed
ambiguity element. The Reynolds-style (mean-probability) AP across
the $K\!=\!64$ draws is $0.1923\!\pm\!0.0017$ at
$k\!=\!16$/$h\!=\!128$, within seed noise of the matched-compute
single-pass number ($0.1916\!\pm\!0.0009$ for
\texttt{random\_augmented} and $0.1923\!\pm\!0.0005$ for
matched-compute \texttt{map} at the same cell). We read this as
direct evidence that the \texttt{random\_augmented} model is
approximately $\G_{\mathrm{LapPE}}$-invariant on the test
distribution at the configurations tested, with the residual
non-invariance growing predictably with the size of the ambiguity
group. Along the hidden-dim axis at $k\!=\!16$
(Table~\ref{tab:lpe_molpcba_orbit}, lower block, $3$ seeds per cell),
the per-graph spread is non-monotone in $h$:
$0.0032\!\pm\!0.0015$ at $h\!=\!16$,
$0.0048\!\pm\!0.0008$ at $h\!=\!128$,
$0.0033\!\pm\!0.0004$ at $h\!=\!512$. We attribute this to a
capacity floor and a capacity ceiling: at $h\!=\!16$ the network
underfits (test AP $\approx\!0.07$, near a constant-prediction
baseline) so per-graph predictions are bunched near a few values
and have little room to spread; at $h\!=\!512$ the network has
enough capacity to learn invariance well; the middle case
$h\!=\!128$ shows the largest residual non-invariance, although the
$h\!=\!16$ row's larger seed std ($0.0015$) suggests this conclusion
is sensitive to which random initialization one looks at.

\begin{table}[h]
\centering
\caption{Per-graph orbit stability of the trained
\texttt{random\_augmented} model under $K\!=\!64$ uniform i.i.d.\
draws from the sign-flip subgroup of $\G_{\mathrm{LapPE}}$, evaluated
on the \texttt{ogbg-molpcba} test set ($43{,}793$ graphs $\times$
$128$ tasks). All entries are the mean $\pm$ sample standard deviation
over the $3$ training seeds. ``Per-graph spread'' is the mean over
graphs and tasks of $(\max\!-\!\min)$ predicted probability across
the $K\!=\!64$ orbit elements. ``Within-seed AP std'' is the standard
deviation of the OGB AP scored on each single orbit draw, averaged
across seeds. ``Reynolds AP'' scores the mean probability over the
$K\!=\!64$ draws. Per-graph spread scales monotonically with
$|\G_{\mathrm{LapPE}}|\!=\!2^{k}$.}
\label{tab:lpe_molpcba_orbit}
\begin{tabular}{rrccc}
\toprule
$k$ & $h$ & per-graph spread & within-seed AP std & Reynolds AP \\
\midrule
\multicolumn{5}{l}{\emph{$k$ axis at $h\!=\!128$ (3 seeds, $|\G_{\mathrm{LapPE}}|\!=\!2^{k}$):}} \\
 3  & 128 & $0.0013 \pm 0.0005$ & $\le 0.0004$ & $0.1764 \pm 0.0062$ \\
 8  & 128 & $0.0027 \pm 0.0012$ & $\le 0.0005$ & $0.1766 \pm 0.0062$ \\
16  & 128 & $0.0048 \pm 0.0008$ & $\le 0.0006$ & $0.1923 \pm 0.0017$ \\
\midrule
\multicolumn{5}{l}{\emph{$h$ axis at $k\!=\!16$ (3 seeds):}} \\
16  &  16 & $0.0032 \pm 0.0015$ & $\le 0.0003$ & $0.0694 \pm 0.0010$ \\
16  & 128 & $0.0048 \pm 0.0008$ & $\le 0.0006$ & $0.1923 \pm 0.0017$ \\
16  & 512 & $0.0033 \pm 0.0004$ & $\le 0.0004$ & $0.2481 \pm 0.0023$ \\
\bottomrule
\end{tabular}
\end{table}

\paragraph{Reading the tables.}
Reading along the hidden-dim axis at fixed $k$ in
Table~\ref{tab:lpe_molpcba_main}, the row-best AP grows monotonically
with $h$ at every $k$, and the row-best arm is most often
\texttt{random\_augmented} (six of nine cells), with \texttt{map}
nominally row-best at $k\!=\!16$/$h\!\in\!\{16,128\}$ and \texttt{oap}
nominally row-best at $k\!=\!3$/$h\!=\!512$. The matched-compute
$k\!=\!16$ slice in Table~\ref{tab:lpe_molpcba_k16} resolves these
margins against seed variance: the smallest sample std on each row
is consistently the \texttt{random\_augmented} std ($0.0008$,
$0.0009$, $0.0026$), while \texttt{map} carries the largest std at
small $h$ ($0.0067$ at $h\!=\!16$). Under a two-sided Welch
$t$-test for unequal variances, the row-best versus row-second-best
margins at $k\!=\!16$ have $p$-values of $0.83$, $0.32$, and $0.48$
at $h\!=\!16,128,512$ respectively, none significant. At $k\!\in\!\{3,8\}$
the row-best margins are similar in magnitude or smaller, with
\texttt{random\_augmented} winning five of six rows there but at
margins of the same order as the seed std. We therefore present
Table~\ref{tab:lpe_molpcba_main} as a nominal-rank summary, not as
a benchmark claim.

\paragraph{Tie to the theory.}
On \texttt{ogbg-molpcba} at matched compute, with $3$ seeds per cell,
we did not detect a statistically significant difference between
training-time random sign-flip augmentation and projection-based
deterministic canonization at any of the nine $(k,h)$ cells; this is
qualitatively consistent with~\eqref{eq:coveringIneq} of
Proposition~\ref{propGen}, which suggests that augmentation can match
canonization on tasks where the covering number of $\G$ is small,
but it is not a positive test of the inequality, since AP gaps of
order $10^{-3}$ are dominated by optimization and seed noise rather
than by the input-space covering complexity, the inequality bounds.
The \textsc{aug-$K$} column moves AP by at most $+0.0005$ across the
nine cells; this is consistent with the cached deterministic
eigsh-sign PE already lying close to the orbit-mean the network was
trained on, but is not itself a direct test of model invariance.

\subsection{Explicit values of Covering Numbers}\label{tab:bounds}
In this appendix, we compute the covering number bounds from Section \ref{sec:case_study} for specific values of $\epsilon,d,n$.  We give bounds on the covering number of the quotient space $K/\G $ when $K=[0,1]^{d\times n}$ and $\G=S_n$, as well as the covering number of the image of $K$ under the Hilbert canonization and the Lexsort canonization, and the covering number of the hypercube $[0,1]^{d\times n}$ which is known to be $k^{nd}$ when $\epsilon=1/(2k)$ (see proof in \ref{covering_cube}). We take characteristic values encountered in point cloud learning scenarios:  point dimensions are $d=3$, a very modest value of $\epsilon=1/6$, and $n$ varying between $250$ and $2000$. The results are shown in Table \ref{tab:bounds} 

\renewcommand{\arraystretch}{1.3} 
\begin{table}[h]
\centering
\caption{\it Comparison of covering number bounds as a function of $n$ between the quotient space,  the Hilbert and Lexsort canonizations, and the hypercube}
\begin{tabular}{lccccc}
\hline
 \textbf{Covering number vs. $n$} & $\mathbf{n=250}$ & $\mathbf{n=500}$ & $\mathbf{n=750}$ & $\mathbf{n=1000}$& $\mathbf{n=2000}$ \\ \hline
Quotient (upper bound) & $2.1 \times 10^{36}$  & $7.4 \times 10^{43}$  & $2.2 \times 10^{48}$  & $3.5 \times 10^{51}$ & $2.0 \times 10^{59}$  \\
Hilbert (upper bound) & $5.3 \times 10^{193}$ & $7.9 \times 10^{278}$ & $5.0 \times 10^{336}$ & $5.0 \times 10^{380}$& $4.4\times 10^{494}$ \\
Lexsort (lower bound) & $1.1 \times 10^{239}$ & $4.0 \times 10^{477}$ & $1.4 \times 10^{716}$ & $5.2 \times 10^{954}$ & $9.2\times 10^{1908}$ \\
Hypercube (exact) & $6.9\times 10^{357}$ & $4.8\times 10^{715}$ & $3.3\times 10^{1073}$ & $2.3\times 10^{1431}$ & $5.3 \times 10^{2862}$\\
 \hline
\end{tabular}
\end{table} 
We note that our bounds for the quotient space and Hilbert canonizations are upper bounds, while the Lexsort bound is a lower bound. Thus, the gap between the covering number of Lexsort and that of the other methods may be even larger than shown in the table. We also note that despite the conservative choice of $\epsilon=1/6$, all covering numbers in the table are enormous. This picture may be over-pessimistic, as the `true data distribution' will be supported on a `small' set $K\subseteq [0,1]^{d\times n} $ of `realistic point clouds'. An experiment estimating the covering number (in fact, the coverage) on the Modelnet40 dataset was  shown  in Table \ref{tab:distances_both}.

\section{Experimental Setup and Hyperparameter Details}
\label{app:exp_setup}

\subsection{Experimental Setup for the covering number experiment}
We applied the following preprocessing for \emph{each} sample: we first sample $256$ points, then we shift it to the positive axis, and then divide by the maximum axis along $x,y,z$, obtaining all entries to be positive and between $0$ and $1$. The metric we considered on the point cloud space is 
$$\rho_{\G}(X,Y) = \frac{1}{n}\sum^{n}_{i=1}\|X_i-Y_i\|_2 .$$
The same metric is used for the two canonizations, and its quotient $\rho_\G$ is used for the quotient space. 

\subsection{Experiment setup: ModelNet40 classification}
\begin{table}[ht]
\centering
\caption{Test accuracy of the Global MLP under varying training set sizes. The performance gap between Hilbert and lexicographical sorting widens as data becomes scarcer, highlighting the regularizing effect of optimal canonization.}
\label{tab:data_scarcity}
\begin{tabular}{lccc}
\toprule
\textbf{Training Samples} & \textbf{None} $\uparrow$ & \textbf{Lexicographical} $\uparrow$ & \textbf{Hilbert} $\uparrow$ \\
\midrule
9840 (full) & 0.433 & 0.700 & \textbf{0.748} \\
4920 & 0.408 & 0.641 & \textbf{0.714} \\
2460 & 0.343 & 0.573 & \textbf{0.673} \\
1230 & 0.307 & 0.504 & \textbf{0.601} \\
\bottomrule
\end{tabular}
\end{table}
In the ModelNet40 classification tasks, we sampled from each input CAD model point cloud, which consists of $1024$ points, and applied an initial batched affine normalization to $[0, 1]^3$. The data is then ordered using one of three strategies: no canonization (baseline), lexicographical sorting (i.e., sorting points sequentially by their $x$, then $y$, then $z$ coordinates), or Hilbert curve sorting (using a space size of $2^{12}$). Following ordering, the coordinates are projected into a higher-dimensional space using Random Fourier Features to capture high-frequency geometric details. The feature vector is then passed through a bottlenecked MLP consisting of residual blocks with dimensions $[256, 128, 64]$, yielding the final classification logits. The MLPs were trained for 100 epochs with a fixed batch size of 256, and were trained using AdamW.

To ensure a rigorous and fair comparison between the different ordering strategies, we conducted comprehensive grid sweeps to identify the optimal hyperparameters for the uncanonized baseline (\texttt{ply}), lexicographical sorting (\texttt{lex}), and Hilbert curve sorting (\texttt{hilbert}). The specific optimal hyperparameters extracted from our sweeps and utilized for the final evaluation are detailed in Table~\ref{tab:hyperparams}.
The best hyperparameters for each model on ModlenNet40 are used for training the models on ModelNet10. 
\begin{table}[h]
\centering
\caption{Optimal hyperparameters identified via grid search for each ordering strategy.}
\label{tab:hyperparams}
\begin{tabular}{lcccccc}
\toprule
\textbf{Method} & \textbf{Learning Rate} & \textbf{Dropout} & \textbf{Weight Decay} & \textbf{Fourier Scale} & \textbf{Label Smoothing} & \textbf{Bands} \\
\midrule
No canon. & $1.63 \times 10^{-3}$ & 0.3 & 0.05 & 1.0 & 0.1 & 2 \\
Lexsort  & $6.41 \times 10^{-4}$ & 0.1 & 0.05 & 0.1 & 0.2 & 3 \\
Hilbert  & $9.61 \times 10^{-4}$ & 0.3 & 0.01 & 0.1 & 0.2 & 3 \\
\bottomrule
\end{tabular}
\end{table}

\paragraph{Robustness Across Seeds}
To account for variance in weight initialization and optimization dynamics, we evaluated the final optimal configurations across 5 independent random seeds. The averaged test accuracy and standard deviations are reported in Table~\ref{tab:seed_robustness}. The results demonstrate that the Hilbert curve canonization not only achieves the highest average accuracy but also exhibits the lowest variance across different initializations.

\begin{table}[h]
\centering
\caption{Test accuracy across 5 independent random seeds using the optimal hyperparameters. Hilbert canonization yields both higher performance and greater stability.}
\label{tab:seed_robustness}
\begin{tabular}{lcc}
\toprule
\textbf{Method} & \textbf{Average Accuracy} $\uparrow$ & \textbf{Standard Deviation} \\
\midrule
No canon. & 0.433 & 0.0249 \\
Lexsort  & 0.700 & 0.01358 \\
Hilbert  & \textbf{0.748} & 0.00944 \\
\bottomrule
\end{tabular}
\end{table}




\subsection{Experimental Setup: Modelnet 40 Data scarcity experiment}
To simulate data scarcity, we subsample the ModelNet40 training set using strides of 1, 2, 4, and 8, resulting in training subsets of 9840, 4920, 2460, and 1230 samples, respectively. Results are shown in Table  \ref{tab:data_scarcity}.

\subsection{Experimental Setup: PCA group averaging}
\label{subsec:ablation_pca}
We use a DeepSets architecture that is fully invariant to point order permutations but strictly non-invariant to the $\{-1, 1\}^3$ sign flips. We intentionally employ a lightweight, low-capacity version of this architecture. A highly over-parameterized model might achieve higher absolute accuracies, potentially masking the generalization gaps we aim to observe; by constraining the network capacity, we ensure that any performance differences are strictly attributable to the chosen alignment strategy. Each input point cloud consists of $1024$ points from the ModelNet40 dataset. We compare four approaches: (1) \textbf{Pure PCA}, which leaves the sign ambiguity unresolved (acting as a baseline or potentially \emph{poor} canonization due to eigensolver discontinuities); (2) \textbf{Skewness}, a deterministic canonization mapping $c: K \to K$ that resolves the ambiguity by orienting axes based on the third moment; (3) \textbf{Frame Averaging}, an explicitly $\G$-invariant model constructed by pooling the network's predictions across all $8$ elements of the group using log-sum-exp; and (4) \textbf{Random Frame}, which applies elements of $\G$ as random data augmentation during training. All models are optimized using Adam for 1600 epochs and evaluated across 5 different random seeds.

\begin{figure}[htbp]
    \centering
    \includegraphics[width=0.7\linewidth]{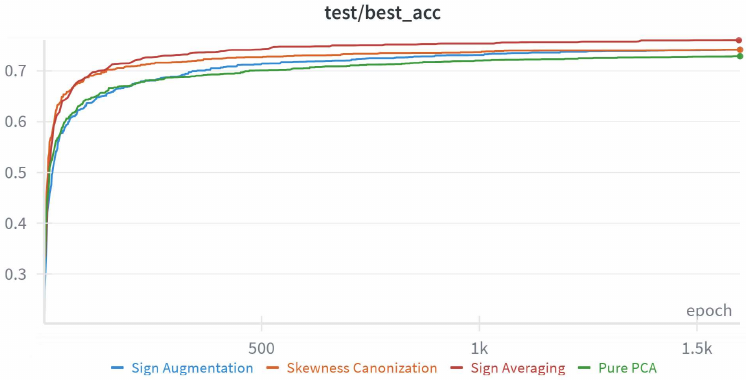}
    \caption{\textbf{Rotation Symmetry Accuracy.} The curves illustrate the comparative performance of the four approaches: Random Frame (blue), Frame Averaging (red), Skewness (orange), and the Pure PCA baseline (green). Frame Averaging achieves the strongest overall performance. While Random Frame and Skewness reach comparable final results—both significantly outperforming the Pure PCA baseline—the augmentation-based Random Frame method exhibits a noticeably slower convergence rate than deterministic canonization via Skewness.}
    \label{fig:loss_pca}
\end{figure}

\subsection{Experimental Setup: Rotated MNIST}
In the last set of the experiments, we compare the performance of several image rotation invariant models. We considered three models: simple CNN, CN(p4)-CNN, and AvgCNN. The backbone for all three models is the same and additional layers are added for CN(p4)-CNN, AvgCNN. 
In order to have the same setting, including the data split we rerun all models, including CN(p4)-CNN.
We used learning rate of $1e^{-3}$ and a batch size of $256$ and an AdamW optimizer. Each experiment is run for $5$ different seeds, and mean and standard deviation are reported. 
\newpage
\end{document}